\newcommand{\cmark}{\ding{51}}%
\newcommand{\xmark}{\ding{55}}%
\newtheorem{proposition}{Proposition}
\theoremstyle{definition}
\newcommand{\ie}{{\emph{i.e.}},\xspace}
\newcommand{\eg}{{\emph{e.g.}},\xspace}
\newcommand{\J}{\mathbf{J}}
\newcommand{\Lc}{\mathcal{L}}
\newcommand{\Oc}{\mathcal{O}}
\newcommand{\M}{\mathbf{M}}
\newcommand{\Mcc}{\mathcal{M}}
\newcommand{\Pc}{\mathcal{P}}
\newcommand{\Rb}{\mathbb{R}}
\newcommand{\Tc}{\mathcal{T}}
\renewcommand{\u}{\mathbf{u}}
\renewcommand{\v}{\mathbf{v}}
\newcommand{\Wc}{\mathcal{W}}
\newcommand{\x}{\mathbf{x}}
\newcommand{\y}{\mathbf{y}}
\newcommand{\Xc}{ \mathcal{X}}
\newcommand{\Yc}{ \mathcal{Y}}
\newcommand{\Qc}{\mathcal{Q}}
\theoremstyle{definition}
\theoremstyle{remark}
\def\@fnsymbol#1{\ensuremath{\ifcase#1\or \ddagger \or * \or
\mathsection\or \mathparagraph\or \|\or **\or \dagger\dagger
\or \ddagger\ddagger \else\@ctrerr\fi}}
\title{\textbf{Q-Tuning}: Queue-based Prompt Tuning for Lifelong Few-shot Language Learning}
\author{
\textbf{Yanhui Guo}$^1\thanks{\ Equal contribution.}\thanks{\ Work done during internship at Amazon.}$\ \ 
\textbf{Shaoyuan Xu}$^2$\footnotemark[1] \ \ 
\textbf{Jinmiao Fu}$^2$\footnotemark[1] \\ 
\textbf{Jia Liu}$^{2,3}$ \ \ 
\textbf{Chaosheng Dong}$^{2}$\ \ 
\textbf{Bryan Wang}$^{2}$
\\ 
  $^1$McMaster University, Canada 
  \quad
  $^2$Amazon, USA
  \quad
  $^3$The Ohio State University, USA
  \\
  \ \ \ \ {\tt \{guoy143\}@mcmaster.ca}\ \ 
  {\tt \{shaoyux,jinmiaof,hliujia,chaosd,brywan\}@amazon.com} 
}
\begin{document}

\maketitle

% Moreover, Q-tuning is the first method that explicitly takes finite system resource into consideration, enabling ``lifelong'' learning for prompt tuning. 

\begin{abstract}
This paper introduces \textbf{Q-tuning}, a novel approach for continual prompt tuning that enables the lifelong learning of a pre-trained language model. When learning a new task, Q-tuning trains a task-specific prompt by adding it to a prompt queue consisting of the prompts from older tasks. To better transfer the knowledge of old tasks, we design an adaptive knowledge aggregation technique that reweighs previous prompts in the queue with a learnable low-rank matrix. Once the prompt queue reaches its maximum capacity, we leverage a PCA-based eviction rule to reduce the queue's size, allowing the newly trained prompt to be added while preserving the primary knowledge of old tasks. In order to mitigate the accumulation of information loss caused by the eviction, we additionally propose a globally shared prefix prompt and a memory retention regularization based on information theory. Extensive experiments demonstrate that our approach outperforms the state-of-the-art methods substantially on continual prompt tuning benchmarks. Moreover, our approach enables lifelong learning on linearly growing task sequences while requiring constant complexity for training and inference. 
\end{abstract}

% Q-tuning addresses the catastrophic forgetting problem and enables forward knowledge transfer in sequentially performing prompt tuning tasks on a given large language model (LLM). 

% !TEX root = ../Continual_Prompt_Tuning.tex

\section{Introduction}\label{intro}

In recent years, pretrained language models (LMs) have achieved huge success in natural language processing \citep{brown2020language,jmfu-CMA-icip,thoppilan2022lamda,jia-etal-2023-kg,openai2023gpt4}, which popularizes the pretraining-finetuning pipeline in applications. 
However, with the ever-growing parameter scale of modern LMs (\eg GPT-4 that may have 1.76 trillion parameters \citep{openai2023gpt4}), it becomes increasingly difficult to finetune the whole model, leading to extensive attention to parameter-efficient finetuning (PEFT) technologies. 
{\em Prompt tuning} (PT) \citep{liu2022p} has recently emerged as a leading PEFT solution.
PT trains soft prompts and prepends them to the input of LMs, while keeping the LM parameters frozen.
Existing works \citep{lester2021power,liu2023pre} have
shown that PT can achieve performance on par with finetuning, while requiring fewer than 0.01\% of the total trainable parameters. Continual prompt tuning (CPT) is a methodology that extends PT to the continual learning (CL) paradigm for learning new tasks that arrive in a {\em sequential} fashion.

CPT encounters technical challenges akin to those faced by traditional CL methods, including the well-known {\em catastrophic forgetting} (CF) \citep{lin2022trgp} and {\em forward knowledge transfer} (FKT). 
To overcome these challenges, \citet{wang2022dualprompt} designed a dual prompt tuning framework including a globally shared prompt and a task-specific prompt. However, continuously optimizing the shared prompt for new tasks will make the learned knowledge from old tasks vanish, leading to less efficient FKT. To improve the FKT, ProgPrompt was proposed by \citet{razdaibiedina2023progressive} which maintains a prompt list for incoming tasks by progressively adding newly trained prompts while storing all previously trained prompts. Following a similar strategy to extending the prompt list, \citet{smith2023coda} proposed a prompt set expansion method by weighting the sum over a group of prompt components for each task. Although these methods succeed in improving the FKT, they suffer from the same problem when the length of prompts grows linearly at a rate of $\Oc(N)$ along with the number of tasks. This leads to an $\Oc(N^2)$ complexity for transformer-based models. Consequently, the training and inference costs will become intractable as $N$ increases and exceeds a finite computation resource limit.

% CF mitigation aims to enable a model to learn and adapt to new information overtime without forgetting previous knowledge. 
% Approaches such as regularization based methods~\citep{zenke2017continual,schwarz2018progress} and memory-replay based methods \citep{BangKY0C21,lin2022trgp} have been proposed to solve the CF problem. 
% Unlike these traditional CL methods, 
% CPT lends itself readily to address the CF issue \citep{zhu2022continual,razdaibiedina2023progressive} by cheaply saving the prompts for each task and reusing them for their corresponding tasks during inference. Nevertheless, how to empower FKT in CPT remains under-explored. 

In this paper, we overcome the aforementioned challenges by proposing a novel continual prompt tuning technology named {\em Queue-based prompt tuning} ({\bf Q-tuning}). 
Q-tuning manages a {\em Queue-based prompt} ({\bf Q-prompt}), which is stored in a {\em finite-size} data buffer. 
For learning a new task, Q-tuning trains a new prompt combined with the fixed Q-prompt that stores all previously learned prompts. Upon the completion of tuning, the latest trained prompt will be added to the Q-prompt for the tuning of the next task.  
Once the number of tasks exceeds the queue-size limit, we will remove less informative prompts according to a principal component analysis (PCA) based dequeue rule.
This endows Q-tuning with the ability to perform lifelong prompt tuning on extremely long task sequences.
Our key contributions and results can be summarized as follows:

% To overcome the aforementioned difficulty, this paper introduces a more practical way of tailoring a \textit{Queue-based Prompt} (\textbf{Q-prompt}) and the associated \textbf{Q-tuning} technology.
% The proposed method will train a new prompt combined with a fixed Q-prompt that saves all previously learned prompts for a new task. After tuning, the new trained prompt will be enqueued for the subsequent tuning on unseen tasks. The Q-prompt is set with a maximum length. Once the number of tasks exceeds the limit, we will discard less informative prompts through a principal component analysis (PCA) based dequeue rule, which lends Q-tuning readily to lifelong learning on extremely long task sequences. 
% Moreover, for better FKT and moderating the information loss caused by trimming the queue, we append a shared prefix prompt to the Q-prompt and an efficient queue ensemble mechanism to dynamically adjust the contributions of the prompts to the new task. As part of our work, we characterize the Q-tuning as a mutual information maximization problem and propose a Memory Retention (MR) technique to promote the global knowledge learning ability of the shared prefix prompt across all tasks. 

\begin{list}{\labelitemi}{\leftmargin=1em \itemindent=-0.0em \itemsep=.1em}

% \vspace{-0.2cm}
\item We propose a continual prompt tuning method called Q-tuning that, to our knowledge, is the first technique to achieve lifelong learning in application scenarios with an agnostic number of new tasks through prompt tuning.

% Our Q-tuning maintains a prompt queue coupled with an adaptive low-rank knowledge aggregation matrix, where the matrix is optimized to capture the importance of the enqueued prompts to enhance FKT.  

% This knowledge aggregation strategy induces a new prompt tuning strategy to enhance FKT. 

% based on maximizing the mutual information between current task and the weighted enqueued prompts. 

\item Q-tuning consists of a prompt queue (Q-prompt) and an adaptive knowledge aggregation low-rank matrix that is optimized to capture the importance of the enqueued prompts to enhance FKT. A novel dequeue rule based on PCA is applied to trim the Q-prompt when it is full. In addition, a globally shared prefix prompt with a memory retention (MR) technique is devised to mitigate the information loss due to dequeuing.

\item We conduct extensive experiments to demonstrate the successful applications of our proposed Q-tuning on few-shot CL tasks. Q-tuning outperforms all the competing CL methods by a large margin. In addition, Q-tuning highlights its ability to facilitate lifelong learning. For instance, our experiments on extremely long learning sequences consisting of 70 disjoint tasks have shown a 30\% accuracy improvement over the standard prompt tuning method.

% We conduct extensive experiments to demonstrate the efficacy of our proposed Q-tuning method on both short and long sequence of tasks. Our method outperforms all the competitors by a large margin and can support the lifelong learning 
% \eg improving $30\%$ versus the standard prompt tuning 
% on the extremely long 70-task sequence as done in our experiments. 

% Our method sheds the light on enabling the lifelong learning capability of LLMs. 

% The proposed technology is universal and can be applied to LLMs. 

% \kevin{Yanhui, please use a few sentences to summarize the key experimental results and findings in this bullet.}

\end{list}

\section{Related work \label{Section:related}}  

% In this section, we provide a quick overview on several related areas, thus familiarizing readers with necessary background and putting our work into comparative perspectives.

{\bf 1) Continual Learning:}
Continual Learning (CL), also known as lifelong learning, is to learn from a stream of different tasks arriving sequentially. The major challenge of CL is to prevent the CF problem \citep{kemker2018measuring} and achieve knowledge transfer \citep{ke2021achieving}. 
%As mentioned earlier, the CF problem is that the model forgets previously learned knowledge after learning a new task, and knowledge transfer refers to utilizing previously learned knowledge from old tasks to facilitate learning new knowledge from new tasks. 
Existing CL approaches can be divided into three categories: 1) Memory-based methods \citep{Shin2017CLDGR,BangKY0C21,jiao2022fine,ermis2022memory} that store previous data and replay them when training on the next task to mitigate CF issue;
2) Regularization-based methods \citep{kirkpatrick2017overcoming,zenke2017continual,schwarz2018progress} that apply an additional regularization loss to constrain the update of parameters that are less important to learning new tasks;
3) Architecture-based methods that dynamically expand the network capacity \citep{rusu2016progressive,yoon2018lifelong} or train new task-specific parameters \citep{yoon2020scalable} while fixing parameters for old tasks to prevent forgetting. However, these methods require finetuning all model parameters and are too expensive to put into practice for large-scale models with an astronomical number of parameters, such as large language models (LLMs).

{\bf 2) Prompt Tuning:}
Prompt tuning \citep{lester2021power,li2021prefix,gu-etal-2022-ppt,jia2022visual,wang2023multitask} is a lightweight approach to finetune an LLM model for a target task, which only requires optimizing a series of virtual tokens (a.k.a ``soft prompt'') instead of updating the entire model. It has been demonstrated that prompt tuning can achieve the same or even better performance than training a full model. 
In prompt tuning, a trainable soft prompt $\theta_{\Pc}$ is prepended to the input text $\x$ while keeping other parameters frozen. In this case, the combined model parameters include trainable prompt parameters $\theta_{\Pc}$ and parameters $\theta_{\Mcc}$ of a pretrained model $\Mcc$.  Given the task $\Tc = (\Xc,\Yc)$ consisting of training pairs $(\x,\y)$, the objective of prompt tuning is: 
% {\fontsize{9}{0}\selectfont
\begin{align}
    \mathop{\max}_{\theta_{\Pc}} \sum_{(\x,\y)\in \Tc}{\log}\, p (\y|\x;\theta_{\Mcc},\theta_{\Pc}).
    \label{eq_loss_prompt}
\end{align}
% }
% From the perspective of information theory \citep{Thomas1999information}, this knowledge instillation process can be viewed as maximizing the mutual information (MI) defined as
% where $p(\hat{\Yc}|\Xc;\theta_{\Mcc},\theta_{\Pc})$ denotes the probability distribution of output parameterized by $\theta_{\Mcc}$ and $\theta_{\Pc}$, 
% and $p(\Yc)$ is the probability distribution of the labels. 
% \begin{align}
%   \Lc_{\Qc} = -\sum_{(\x_i,\y_i)\in \Tc_i}{\log}\, p (\y_i|\x_i;\theta_{\Mcc},\Wc^i \circ \Qc^i(\theta_{\Pc_1},\cdots,\theta^i_{\Pc})),
%     \label{eq_loss_queue}
% \end{align}
{\bf 3) Continual Prompt Tuning:}
Many works \cite{zhu2022continual, yin2022contintin, ermis2022memory,wang2022dualprompt,razdaibiedina2023progressive} have applied prompt tuning to the continual learning domain, but we observe some limitations of these methods. 
For example, the techniques proposed by \citet{zhu2022continual, ermis2022memory} require a large data buffer to store training samples from previous tasks for anti-forgetting. The paradigms of progressively extending the prompts \cite{razdaibiedina2023progressive, Wang2022SPromptsLW, smith2023coda} are inapplicable to the scenario with an infinitely increasing number of tasks.

To address the aforementioned limitations, we introduce Q-tuning, which is data-buffer-free and enables anti-forgetting lifelong learning in the face of an ever-expanding stream of tasks.

% !TEX root = ../Continual_Prompt_Tuning.tex
\algnewcommand\algorithmicinput{\textbf{Input:}}
\algnewcommand\algorithmicoutput{\textbf{Output:}}
\algnewcommand\algorithmicinitialization{\textbf{Initialize:}}
\algnewcommand\Input{\item[\algorithmicinput]}%
\algnewcommand\Output{\item[\algorithmicoutput]}%
\algnewcommand\Initialize{\item[\algorithmicinitialization]}%

\section{The Q-Tuning Approach}\label{Section: algorithm}
% \vspace{-0.1cm}

\begin{figure*}[htbp]
% \vspace{-0.2cm}
\centering
\includegraphics[width=1\textwidth]{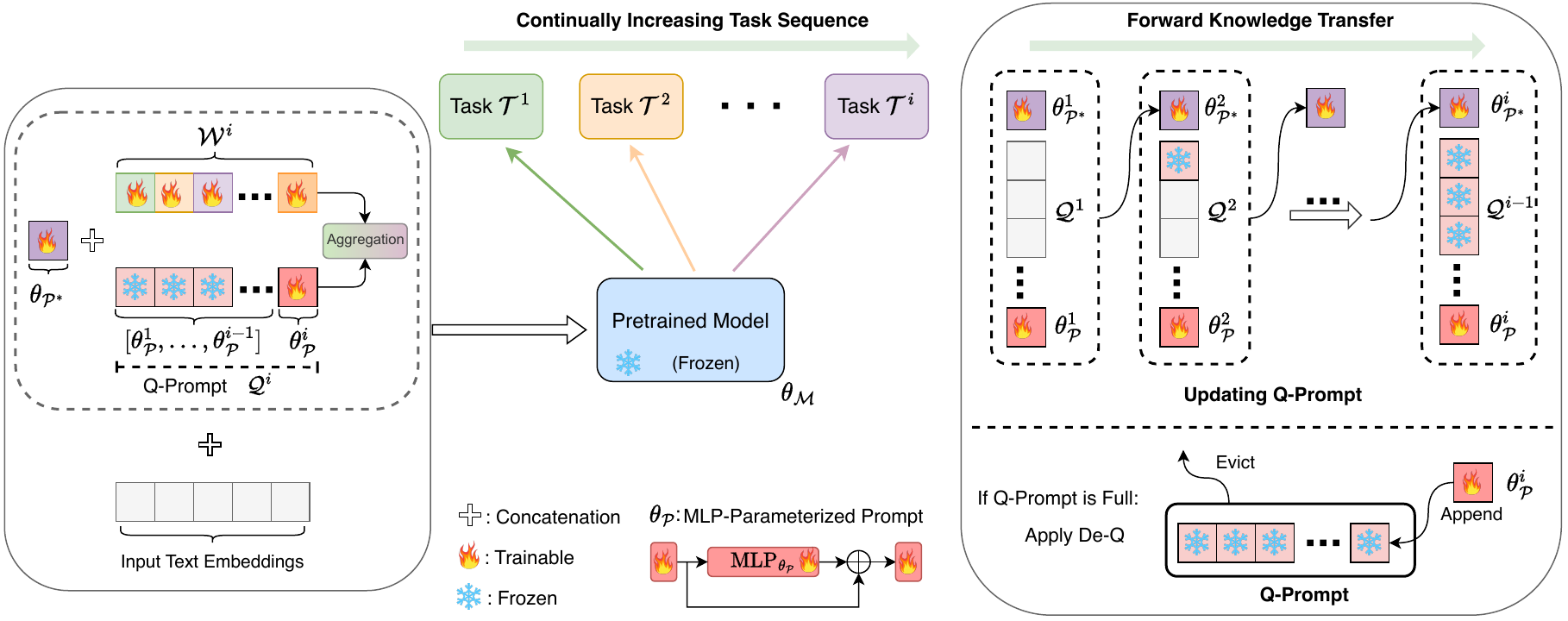}
\captionof{figure}{\footnotesize The overall framework of the proposed \textbf{Q-tuning} technology. Given a continually growing-up task sequence, we propose a prompt queue (Q-prompt) and a globally {\em shared} prefix prompt $\theta^i_{\Pc^{\ast}}$ to achieve the forward knowledge transfer, where the superscript of $\theta^i_{\Pc^{\ast}}$ denotes the $i$-th status. Moreover, we adopt a knowledge aggregation method to adaptively adjust the contribution of each fixed prompt $[\theta^1_{\Pc},\theta^2_{\Pc},\ldots,\theta^{i-1}_{\Pc}]$ in Q-prompt by using a rank-one matrix $\Wc^i$. We parameterize the trainable soft prompt by a two-layer residual MLP. If the length of the Q-prompt exceeds the limit, we apply a De-Q rule to discard less informative prompts in the queue.}
\label{fig:overall_framework}
% \vspace{-10pt}
\end{figure*}

% This section is organized as follows: 
% In Sec.~\ref{sec_Continual_q_tuing}, we propose the Queue-based prompt (Q-prompt) framework. It leverages a Q-prompt ensemble technique to advance the efficacy of FKT and an queue eviction rule to manage the Q-prompt for the purpose of enabling lifelong learning. 
% In Sec.~\ref{sec_shared_prompt}, we introduce a globally shared prefix prompt and an associated memory retention loss to learn the global knowledge across all the observed tasks. To remedy the information loss in the trimmed Q-prompt, the shared prefix prompt is penalized to protect the old knowledge from forgetting by maximizing the mutual information between its current status and the prior status of Q-prompt that contains old memory. 
% In Sec.~\ref{sec_mi}, we present the training objectives of Q-tuning. 

% derive a lower bound of the mutual information and achieve the mutual information maximization by maximizing the lower bound. 

% \kevin{Yanhui, maximizing a lower bound of a function doesn't mean maximizing a function. So we have to be careful in the wording here. Perhaps we can say "..., which aids/assists in maximizing the mutual information."}

% \vspace{-0.2cm}
\subsection{Q-prompt and Update Rule \label{sec_Continual_q_tuing}} 
% \vspace{-0.1cm}
\paragraph{Q-prompt:} 

Given the continually increased task set $\Tc = \{(\Xc^1,\Yc^1),(\Xc^2,\Yc^2),\ldots,(\Xc^i,\Yc^i)\}$, where $\mathcal{T}^i=(\Xc^i,\Yc^i)$ denotes the training pairs on $i$-th task, a naive solution is maintaining an increased prompt list \cite{razdaibiedina2023progressive} $[\theta^1_{\Pc},\theta^2_{\Pc},\ldots,\theta^i_{\Pc}]$, where $[\cdot, \cdot]$ is the concatenation operation. 
The objective for the $i$-th task is:

\begin{align}
{\fontsize{9}{0}\selectfont
   \mathop{\max}_{\theta^i_{\Pc}} \sum_{(\x^i,\y^i)\in \Tc^i}{\log}\, p (\y^i|\x^i;\theta_{\Mcc}, \underbrace{[\theta^1_{\Pc},\theta^2_{\Pc},\ldots,\theta^i_{\Pc}]}_{\text{prompt list for $N$ tasks}}).
}
    \label{eq_loss_cl_prompt}
\end{align}
For each task, only the newly appended prompt is trainable, while the previously trained prompts are fixed. However, 
when $N$ grows asymptotically (\ie the model is set as a lifelong learner), training the extremely long prompt list becomes intractable due to the finite system resources. This motivates us to propose the Q-tuning technique. 

Fig.~\ref{fig:overall_framework} illustrates the overall framework of the Q-tuning technique.
In Q-tuning, we add a new trainable prompt to a prompt queue $\Qc$ that stores all previously trained prompts for old tasks. This updated $\Qc$ associated with a globally shared prefix prompt will be tuned for the new task while keeping the prior prompts in $\Qc$ frozen. This progressively appending approach enables forward knowledge transfer as the old task's information is saved in the Q-prompt. 
%To make the prompt queue controllable, its maximum capacity is set as $C$.
We let $C=l\times\Qc_{\text{size}}$ denote the maximum capacity of the Q-prompt $\Qc$, where $l$ is the length of a single prompt per task and $\Qc_{\text{size}}$ is the maximum number of prompts in the queue. 
When reaching the capacity limit of $\Qc$, the Q-prompt will be trimmed using an eviction rule to remove less informative prompts and append new trainable prompts for future tasks.

% a new trainable prompt for the new task will be added to
% $\Qc$ while prior trained prompts in $\Qc$ are frozen. The new $\Qc$ 
% The updated $\Qc$ with new prompt will be tuned for the new task, where only the new added prompt will be optimized. 

% This new trainable prompt will be added to prompt queue $\Qc$ without modifying previous trained prompt parameters for old tasks. 
% Thus, Q-tuning, by fixing these previous trained prompts in queue, enables forward knowledge transfer. 
% To manage the trained prompts, we propose a prompt queue $\Qc$ which has a maximum length $C$. 

% During Q-tuning, a new trainable prompt will be added to $\Qc$, and this updated $\Qc$ associated with a globally shared prompt will be tuned for the new task, while keeping the prior prompts in $\Qc$ frozen. 

\vspace{-0.1cm}
\paragraph{Knowledge Aggregation for FKT:} In Q-tuning, all prompts in the memory (\ie the Q-prompt $\Qc$), as well as the pretrained LM model, are frozen when learning a new task. 
Consequently, the LM model will be forced to take these fixed prompts in the queue as inputs without considering their relevance to the current task, leading to suboptimal performance. 
% This lacks an adaptive mechanism to ensemble the current added prompts for solving an new incoming task, which may lead to sub-optimal performance. \jinmiao{rephrase this part: Something like because the language model is frozen, when learning a new task, the model will be forced to take previous prompts as inputs without incorporating their relevance to the current task.}
To address this problem, we propose a prompt-based knowledge aggregation mechanism. 
For task $i$, we use a trainable matrix $\Wc^i \in \Rb^{c^i\times d}$, which is of the same dimension as the  Q-prompt $\Qc^i$, to scale $\Qc^i$ by $\Wc^i\circ \Qc^i$ ($\circ$ denotes the Hadamard product). Here, for task $i$, we denote the total prompt length of $\Qc^i$ by $c^i = l\times i$. 
Since directly optimizing a large-scale matrix of size $c^i \times d$ is costly, we propose a low-rank multiplicative method inspired by \citet{aghajanyan2021intrinsic}. 
The weight matrix $\Wc^i$ can be expressed as $\Wc^i = \u_i \otimes \v_i^{\mathrm{T}}$, where $\u_i \in \Rb^{c^i}$, $\v_i \in \Rb^{d}$ and $\otimes$ denotes the outer product. 
Clearly, $\Wc^i$ is a rank-one matrix, and the number of trainable parameters is reduced to $c^i+d \ll c^i\times d$. We jointly optimize the newly appended prompt $\theta^i_{\Pc}$ and the low-rank aggregation matrix $\Wc^i$ by maximizing the cross-entropy loss as follows: 
%\vspace{-0.3cm}
% \begin{equation}
% {\fontsize{9}{0}\selectfont
\begin{align}
    \mathop{\max}_{\theta^i_{\Pc},\Wc^i}  \sum_{(\x^i,\y^i)\in \Tc^i}{\log}\, p (\y^i|\x^i;\theta_{\Mcc},&\nonumber\\
    \Wc^i \, \circ \underbrace{\Qc^{i}(\theta^1_{\Pc},\cdots, \theta^i_{\Pc})}_{\text{maximum length is }C = l\times \Qc_{\text{size}}}),&
\label{w_qtuning_loss}
\end{align}
% }
% \end{equation}
where only the new added prompt $\theta^i_{\Pc}$ and the weight matrix $\Wc^i$ for the $i$-th task are trainable.
% \vspace{-0.7cm}
\paragraph{De-Q Rule:} 
Our Q-prompt design allows appending newly trained prompts until they reach the maximum length. 
Once the Q-prompt is full (denoted by $\Qc_C$), a dequeuing (De-Q) rule is executed to reduce the length of $\Qc_C$ to $C-l$ to add the new prompt.
However, this leads to a challenge: {\em how to retain the most useful prompt information after trimming the Q-prompt?} Straightforward De-Q rules include random eviction and first in first out (FIFO). However, these simple rules may discard valuable information in the queue, resulting in negative impacts on FKT. 

% An alternative solution is to measure the correlation between a new task and the old tasks, similar to \citet{zhu2022continual}, and remove the most task-irrelevant prompts from the queue to learn the new task. However, this approach requires extra computing resources to maintain the data buffer of old tasks, and the quantitative correlation of different tasks is hard to define. 

To address the above problem, we introduce a simple yet effective De-Q rule named DQ-PCA based on principal component analysis (PCA) \citep{shlens2014tutorial}. 
% When tuning on current task, adding the new trainable prompt $\theta^i_{\Pc}$ reaching the maximum length $C$ of the Q-prompt. 
% In this case, we have to reduce the length of $\Qc_C$ to $C-l$ to add the new prompt. 
Specifically, we first calculate the centered Q-prompt $\tilde{\Qc}_C \in \Rb^{C\times d}$ with a zero mean: $\tilde{\Qc}_C = \Qc_C - \text{mean}(\Qc_C)$. Then we perform singular value decomposition (SVD). We extract the first $C-l$ principal components to obtain the trimmed Q-prompt $\tilde{\Qc}_{C-l}\in \Rb^{(C-l)\times d}$ and enqueue the new trainable $\theta^i_{\Pc} \in \Rb^{l\times d}$. 
This process can be written as follows:%\vspace{-0.2cm}
% {\fontsize{9}{0}\selectfont
\begin{align}
 \text{SVD}(\tilde{\Qc}_C) &= U \Sigma V^{\mathrm{T}},\ 
 \tilde{\Qc}_{C-l} = \Sigma_{C-l}V^{\mathrm{T}}_{C-l},\
 \\
 &\Qc_C \xleftarrow{\text{Update}}  \tilde{\Qc}_{C-l} \oplus \theta^i_{\Pc},
\label{eq_enqueue}
 % \vspace{-0.4cm}
\end{align}
% }
where $\oplus$ denotes the concatenation operation $[\tilde{\Qc}_{C-l},\theta^i_{\Pc}]$,  $U \in \Rb^{C\times C}$ is the matrix consisting of the left singular vectors, $\Sigma \in \Rb^{C\times d}$ is the diagonal matrix formed by the singular values in decreasing order and $V^{\mathrm{T}}$ 
% $V^{\mathrm{T}} \in \Rb^{d\times d}$ 
is the matrix of right singular vectors. The matrix 
$V_{C-l}^{\mathrm{T}}$ 
% $V_{C-l}^{\mathrm{T}} \in \Rb^{(C-l)\times d}$ 
is formed by the top $C-l$ principle row vectors of $V^{\mathrm{T}}$ and $\Sigma_{C-l}\in \Rb^{(C-l)\times (C-l)}$ 
denotes the diagonal matrix with the top $C-l$ singular values. 
When the length of the Q-prompt exceeds $C$, it will trigger the DQ-PCA to shrink the Q-prompt's length to $C-l$. As a result, Q-tuning achieves an $\Oc(1)$ training and inference complexity instead of $\Oc(N^2)$ for transformer-based LMs, thereby enabling low-cost lifelong learning\footnote{For example, on a single NVIDIA V100 GPU (32GB) with the same training setting as ProgPrompt \citep{razdaibiedina2023progressive}, Q-tuning can easily handle an extremely long 70-task sequence, while ProgPrompt fails due to memory overflow (cf. our experiments).}.

% \begin{wrapfigure}[10]{r}{0.35\textwidth}
% \vspace{-1.1cm}
% \centering
% \includegraphics[width=0.35\textwidth]{Fig/}
% \captionof{figure}{\footnotesize Visual illustration of mutual information maximization. $\Pc^{\ast}$, $\Pc_{old}$ and $\Pc_{new}$ denote the information of the prompt, the old memory, and the new knowledge, respectively. }
% \label{fig:mi_visulization}
% \end{wrapfigure}
% \vspace{-0.2cm}

\subsection{Prefix Prompt for Knowledge Sharing \label{sec_shared_prompt}}
Although DQ-PCA is able to minimize information loss by keeping the most useful information from previous prompts in the queue, information loss will inevitably accumulate as the number of tasks grows larger. 
To avoid such loss, we introduce a globally shared prefix prompt, $\theta_{\Pc^{\ast}}$. This prefix prompt is appended to the head of the Q-prompt and continually trained across all the tasks for aggregating the global information. 
However, continuously training the shared prompt $\theta_{\Pc^{\ast}}$ will make it lean toward the newest task. 
% lead to dominance by the newest task. 
% naively training the shared prompt $\theta_{\Pc^{\ast}}$ continuously across the tasks will lead to dominance by the newest task, hence causing the forgetting of the old knowledge.
To address this limitation, we propose a \textit{memory retention} (MR) regularization by maximizing the overlapping information between the shared prefix prompt and the learned knowledge from old tasks. 
% Next we explain how we analytically formulate the overlapping information. f
For each task $i$, we formulate the maximization problem as:
\begin{align}
   \mathop{\max}_{\theta^i_{\Pc^{\ast}}}\ & \ I(\underbrace{p(\y^i|\x^i;\theta_{\Mcc},\
   \theta^i_{\Pc^{\ast}})}_{p({\xi^i})};\  \nonumber
   \\
   &\underbrace{p(\y^i|\x^i;\theta_{\Mcc}, \Wc^{i-1} \circ [\theta^{i-1} _{\Pc^{\ast}},\Qc^{i-1}])}_{p({\xi^{i-1}})}),
   \label{loss_MI_memory}
%   \vspace{-0.5cm}
\end{align}
where $I(\cdot,\cdot)$ represents the mutual information between two random variables, $\theta^i_{\Pc^{\ast}}$ denotes the shared prompt to be learnt for $i$-th task, $\theta^{i-1}_{\Pc^{\ast}}$ is the shared prompt learnt until task $i-1$, and $\Qc^{i-1}$ denotes the Q-prompt until task $i-1$. 
The second term $p({\xi^{i-1}})$ in Eq.~(\ref{loss_MI_memory}) represents the old knowledge learnt before the $i$-th task, provided by the shared  $\theta^{i-1}_{\Pc^{\ast}}$ and the Q-prompt $\Qc^{i-1}$. 
Maximizing Eq.~(\ref{loss_MI_memory}) can transfer the old knowledge modeled by $p({\xi^{i-1}})$ to current shared prompt $\theta^i_{\Pc^{\ast}}$. Such regularization can mitigate the information loss caused by trimming $\Qc^{i-1}$ when the Q-prompt $\Qc^{i-1}$ at task $i-1$ reaches its maximum length. As a result, the full information prior to the new task $i+1$ can be represented by the union of $\Qc^{i}$ and $\theta^{i}_{\Pc^{\ast}}$.

To solve the mutual information $I(p(\xi^i);p({\xi^{i-1}}))$ in  Eq.~(\ref{loss_MI_memory}), we adopt the mutual information estimator\footnote{More details about the deviation of the mutual information estimator can be found in Appendix~\ref{appendix:mutual_information}.
} \citep{hjelm2018learning,poole2019variational} based on the Jensen-Shannon divergence (JSD), which satisfies:
%\vspace{0.6cm}
% {\fontsize{9}{0}\selectfont
\begin{align}
I&(p(\xi^i);p({\xi^{i-1}})) := \mathcal{D}_{\text{JSD}}(\boldsymbol{\J};\boldsymbol{\M})\nonumber
\\
&\geq \mathbb{E}_{z \sim \boldsymbol{\J}}\left[-\sigma(-\mathcal{F}_{\omega}(z))\right] - \mathbb{E}_{z^{\prime}\sim \boldsymbol{\M}}\left[\sigma(\mathcal{F}_\omega(z^{\prime}))\right],
\label{mi_jsd}
\end{align}
% }
where $\boldsymbol{\J} = p(\xi^i,\xi^{i-1})$ and $\boldsymbol{\M}=p(\xi^i)p(\xi^{i-1})$ are the joint and the product of marginals of the random variables $\xi^i$ and $\xi^{i-1}$, respectively, and $\sigma(t)=\mathrm{log}(1+e^t)$. 
$\mathcal{F}_{\omega}$ is a discriminator function \citep{nowozin2016f} modeled by an auxiliary neural network with parameters $\omega$. 

% $\Wc^{\ast}$ is the queue ensemble weight for the shared prompt. 

% , and $\eta$ is the trade-off parameter named memory factor. \kevin{May need to explain the role of $\eta$.} 

% \begin{align}
%   \mathop{\max}_{\theta^i_{\Pc^{\ast}},\Wc^{\ast}} & I(p(\y^i|\x^i;\theta_{\Mcc},\Wc^{\ast} \circ \theta^i_{\Pc^{\ast}});p(\Yc_i))  \nonumber\\
%   &+ \eta I(p(\y^i|\x^i;\theta_{\Mcc},\Wc^{\ast} \circ \theta^i_{\Pc^{\ast}}); p(\y^i|\x^i;\theta_{\Mcc}, \Wc^{i-1} \circ [\theta^{i-1}_{\Pc^{\ast}},\Qc^{i-1}]))
%   \label{loss_MI_memory}
% \end{align}
% where $\theta^i_{\Pc^{\ast}}$ denotes the shared prompt for $i$-th task, $\theta^{i-1}_{\Pc^{\ast}}$ is the last status of the shared prompt, $\Qc^{i-1}$ denotes the last status of the Q-prompt, $\Wc^{\ast}$ is the weight parameter of the shared prompt for queue ensemble, and $\eta$ is the trade-off parameter named memory factor. \kevin{May need to explain the role of $\eta$.} 

\subsection{Objective Function of Q-Tuning \label{sec_mi}}

% From information theory perspective, Eq.~(\ref{MI_qtuning_loss}) and Eq.~(\ref{loss_MI_memory}) pave the way to achieve good continual prompt tuning performance by jointly training the Q-prompt and a global shared prefix prompt . 

% However, quantifying the amount of information shared by two random variables with unknown distributions is difficult. 
% To overcoming this challenge, we can devise a lower bound of the mutual information.

% \jinmiao{Explain we will maximize the lower bound, hence we define $L_{MR}$ as ...}

Given the $i$-th classification task, the training objective of Q-tuning is defined as:
%\vspace{-0.1cm}
% {\fontsize{9}{0}\selectfont
\begin{align}
\Lc_{\Qc}(\theta^i_{\Pc^{\ast}},\theta^i_{\Pc},\Wc^i) = -\sum_{(\x^i,\y^i)\in \Tc^i}{\log}\, p (\y^i|\x^i;\theta_{\Mcc},&\nonumber
\\
\theta^i_{\Pc^{\ast}},
\Wc^i \circ \Qc^i(\theta^1_{\Pc},\cdots,\theta^i_{\Pc})),&
\end{align}
% }
where $\Tc^{i}$ denotes the data streams of the $i$-th task. The pretrained model $\theta_{\Mcc}$ and all the enqueued prompts prior to $i$-th task are fixed. The trainable parameters include the shared prefix prompt $\theta^i_{\Pc^{\ast}}$, the newly appended prompt $\theta^i_{\Pc}$ and the queue aggregation matrix $\Wc^i$. 

For the prefix prompt $\theta^i_{\Pc^{\ast}}$, we enable its capability for memorizing the knowledge of old tasks with the MR regularization defined by Eq.~(\ref{loss_MI_memory}). According to Eq.~(\ref{mi_jsd}), we can maximize the lower bound of the mutual information, which can be rewritten as minimizing a loss $\Lc_{\text{MR}}$ with respect to $\theta^i_{\Pc^{\ast}}$:
%\vspace{-0.1cm}
% {\fontsize{9}{0}\selectfont
\begin{align}
  \Lc_{\text{MR}}(\theta^i_{\Pc^{\ast}}) &= -\mathbb{E}_{z \sim \boldsymbol{\J}}\left[-\sigma(-\mathcal{F}_{\omega}(z))\right] \nonumber
  \\
  &+ \mathbb{E}_{z^{\prime}\sim \boldsymbol{\M}}\left[\sigma(\mathcal{F}_\omega(z^{\prime}))\right],
  \label{eq_loss_MR}
\end{align}
% }
where $\boldsymbol{\J}$ and $\boldsymbol{\M}$ are defined in Eq.~(\ref{loss_MI_memory}) and Eq.~(\ref{mi_jsd}). The MLP-based discriminator $\mathcal{F}_\omega(\cdot)$ consists of two 512-unit hidden layers. To optimize Eq.~(\ref{eq_loss_MR}) on a given finite training data set, we approximate the expectations using minibatch samples as in \citet{belghazi18a}. 

Putting all things together, we obtain the overall loss:
% {\fontsize{9}{0}\selectfont
\begin{align}
   \Lc_{total} = \Lc_{\Qc}(\theta^i_{\Pc^{\ast}},\theta^i_{\Pc},\Wc^i)+ \eta \Lc_{\text{MR}}(\theta^i_{\Pc^{\ast}}),
   \label{eq_loss_total}
\end{align}
% }
where $\eta$ is called ``memory factor'' which is used to weigh the contribution of $\Lc_{\text{MR}}$. When the number of tasks $N\leq C$, we set $\eta=0$, whereas if $N> C$, we set $\eta>0$. We empirically find the best $\eta$ as reported in Table~\ref{tab:eta_ablation} of Appendix~\ref{appendix:more_results}. Algorithm~\ref{q_tuning_algorithm} summarizes the Q-tuning algorithm.

% !TEX root = ../Continual_Prompt_Tuning.tex
\newcommand{\tabincell}[2]{\begin{tabular}{@{}#1@{}}#2\end{tabular}}  

\section{Experiment Settings}

\subsection{Datasets and Baseline Methods}
\paragraph{Datasets:} 
Following \citet{razdaibiedina2023progressive}, we evaluate the proposed Q-tuning using two \textit{few-shot} CL benchmark settings including short-sequence experiments, long-sequence experiments, and lifelong learning experiments. 

In the short-sequence CL experiments, we adopt 
five text classification datasets \cite{zhang2015character}, including YP reviews, Amazon reviews, DBpedia, Yahoo Answers, and AG News. 
To validate our method's efficacy on different model backbones, we adopt the T5-large model and the BERT-base model for evaluation. For the T5 experiments, 
% To demonstrate that Q-tuning is robust against the order of received tasks, for the experiments with T5, 
we use three different orders (\ie Orders 1$\sim$3\footnote{\label{detailsoforders}The details of each order are reported in Table~\ref{tab:seq} of the Appendix. For each order, as in \citet{razdaibiedina2023progressive}, we train three versions of models, with 16 (or 20), 200, and 1000 training samples per class respectively, and report the performance on the test sets correspondingly.}) composed of the AG News, Amazon, Yahoo, and DBpedia datasets 
% and sample 16 examples per class for the training set, while keeping the test sets unchanged, 
by following the few-shot CL setting as in \citet{qin2021lfpt5,razdaibiedina2023progressive}. 
For the BERT experiments, we use four different orders (\ie Orders 4$\sim$7\footref{detailsoforders}) including all the above five tasks, 
and we use the same train and test split as IDBR \cite{huang2021continual} including 115,000 training and 7,600 test examples. 

In the long-sequence CL experiments, following \citet{razdaibiedina2023progressive}, we choose 15 different tasks, which consist of the aforementioned five datasets from the short-sequence CL benchmark, four tasks from GLUE benchmark (MNLI, QQP, RTE, SST2) by \citet{wang2018glue}, five tasks from SuperGLUE benchmark by \citet{wang2019superglue} (WiC, CB, COPA, MultiRC, BoolQ), and IMDB movie reviews dataset \citep{maas2011learning}. 
We use three different orders (\ie Orders 8$\sim$10\footref{detailsoforders}). 

Lastly, to mimic the lifelong learning scenario, we further add the Banking77 dataset \cite{casanueva2020efficient}, the Emotion dataset \citep{saravia2018carer}, the rest datasets (WNLI, COLA and QNLI ) of the GLUE benchmark, and WSC of the SuperGLUE benchmark. 
We construct a benchmark with a long sequence of 70 tasks by splitting the datasets with over 4 classes into \textit{disjoint} subsets\footnote{Please refer to Appendix~\ref{appendix:datasets} and Appendix~\ref{appendix:orders}.}. 
Following \citet{razdaibiedina2023progressive}, for each task, we randomly select 500 samples per class from the training set for validation, and use early stopping based on the validation accuracy. 

% we add the tasks of WNLI, COLA and QNLI from the GLUE benchmark, WSC from the SuperGLUE benchmark, the Banking77 dataset \citep{casanueva2020efficient} and the Emotion dataset  \citep{saravia2018carer} to form an extremely long sequence including 70 tasks. In this 70-task experiment, we split the DBpedia set into 7 disjoint tasks, the Yahoo set into 5 disjoint tasks, and the Banking77 set into 38 disjoint tasks (removing 1 class), and the Emotion dataset into 3 disjoint tasks, where each task has two 2 classes. These split 53 subsets plus the rest 17 datasets constitute the final 70-task dataset. 

% To comprehensively evaluate our method, we consider three different versions of each dataset, with 20, 200 and 1000 training samples sampled per class,
% and report the test set performance for each of these settings. 
% We also consider 

% We provide more details about the datasets in Appendix~\ref{appendix:datasets} and the task sequences used in our experiments in Appendix~\ref{appendix:orders}. 

% \vspace{-0.3cm}
\paragraph{Baseline Methods for Comparison:} 
In the experiments, we compare our model with 11 baseline methods including: 
(1) \textbf{Per-task Finetune} \citep{razdaibiedina2023progressive}, 
(2) \textbf{Continual Finetune} \citep{wang2020efficient, huang2021continual}, 
(3) \textbf{Prompt Tuning} \citep{qin2021lfpt5,lester2021power}, 
(4) \textbf{Data Replay} \citep{de2019episodic},
(5) \textbf{EWC} \citep{kirkpatrick2017overcoming}, 
(6) \textbf{A-GEM} \citep{chaudhry2018efficient}, 
(7) \textbf{LFPT5} \citep{qin2021lfpt5}, 
(8) \textbf{MBPA++} \citep{de2019episodic}, 
(9) \textbf{IDBR} \citep{huang2021continual},
(10) \textbf{Per-task Prompt} \citep{lester2021power}, and  
(11) \textbf{ProgPrompt} \citep{razdaibiedina2023progressive}\footnote{More introductions to these competing methods are provided in Appendix~\ref{appendix:experiment} due to space limitation.}.

% \subsection{Methods for Comparison} 

% We compare our Q-tuning with the following 11 methods in the CL experiments: 
% (1) \textbf{Per-task Finetune}, 
% (2) \textbf{Continual Finetune}\citep{wang2020efficient, huang2021continual}, 
% (3) \textbf{Prompt Tuning}\citep{qin2021lfpt5,lester2021power}, 
% (4) \textbf{Data Replay}, 
% (5) \textbf{EWC} \citep{kirkpatrick2017overcoming}, 
% (6) \textbf{A-GEM} \citep{chaudhry2018efficient}, 
% (7) \textbf{LFPT5} \citep{qin2021lfpt5}, 
% (8) \textbf{MBPA++} \citep{de2019episodic}, 
% (9) \textbf{IDBR} \citep{huang2021continual},
% (10) \textbf{Per-task Prompt} \citep{lester2021power}, and  
% (11) \textbf{ProgPrompt} \citep{razdaibiedina2023progressive}. More details about the compared methods are reported in Appendix~\ref{append    ix:experiment}.

\begin{table}[H]
% \vspace{-0.3cm}
% \setlength{\tabcolsep}{3.4pt}
\fontsize{9}{6}\selectfont
\begin{subtable}[c]{.48\textwidth}
  \centering
  \scalebox{0.88}{
  \begin{tabular}{lccccc} % <-- Alignments: 1st column left, 2nd middle and 3rd right, with vertical lines in between
    \toprule
      &   & \multicolumn{3}{c}{\textbf{Order}}  &  \\
      \textbf{Method}  & DR & \textbf{1} & \textbf{2}  & \textbf{3}  & \textbf{avg} \\
      \midrule
      Per-task Finetune$^{\ddag}$ &    & 70.0 & 70.0 & 70.0 & 70.0 \\
      Continual Finetune$^{\Box}$  & & 18.9 & 24.9 & 41.7 & 28.5  \\
      Data Replay   &   \checkmark &  35.4 & 37.1 & 41.5 & 38.0  \\
      EWC$^{\Box}$  &   &  39.0 & 38.0 & 44.8 & 40.6  \\
      %PromptTuning   & x & x & x & x \\
      LFPT5$^{\ast \Box}$ &   \checkmark &  47.6 & 52.6 & 57.9 & 52.7 \\
      ProgPrompt$^{\ast}$ &   &  74.1 & 74.2 & 75.3  & 74.5  \\
      \midrule 
      Ours$^{\ast}$ &   &  \textbf{75.8} & \textbf{75.8} & \textbf{76.9} & \textbf{76.2}\\
     \bottomrule
    \end{tabular}
    }
    \subcaption{Results with the T5-large model.}
    \label{tab:table_t5}
\end{subtable}
% \begin{minipage}{.48\textwidth}
\begin{subtable}[c]{0.48\textwidth}
\vspace{5pt}
  \centering
  \scalebox{0.82}{
    \begin{tabular}{lcccccc} % <-- Alignments: 1st column left, 2nd middle and 3rd right, with vertical lines in between
    \toprule
      & & \multicolumn{4}{c}{\textbf{Order}} & \\
      \textbf{Method} & DR & \textbf{4} & \textbf{5}  & \textbf{6}  & \textbf{7} & \textbf{avg} \\
      \midrule
      Per-task Finetune$^{\ddag}$ &  & 73.9 & 73.9 & 73.9 & 73.9 & 73.9 \\
      Continual Finetune$^{\diamondsuit}$ & & 14.8 & 27.8 & 26.7 & 4.5  & 18.4 \\
      %PromptTuning & x & x & x & x & x \\
      Data Replay$^{\diamondsuit}$ & \checkmark  & 67.2 & 64.7 & 64.7 & 44.6 & 57.8 \\
      A-GEM$^{\diamondsuit}$  & \checkmark & 70.6 & 65.9 & 67.5 & 63.6 & 66.9 \\
      MBPA++$^{\diamondsuit}$ & \checkmark  & 70.8 & 70.9 & 70.2 & 70.7 & 70.6 \\
      IDBR$^{\dag}$  & \checkmark & 75.9 & 76.2 & 76.4 & 76.7 & 76.3 \\
      ProgPrompt$^{\ast}$ & & 77.8 & 77.5 & 77.6 & 77.4 & 77.6\\
      \midrule
      Ours$^{\ast}$ &   &  \textbf{78.5} & \textbf{78.3} & \textbf{78.3}  & \textbf{78.4} & \textbf{78.4} \\
      \bottomrule
    \end{tabular}
    }
    \subcaption{Results with the BERT-base model.}
    \label{tab:table_bert}
\end{subtable}
% \end{minipage}
\caption{Summary of the results with T5 and BERT models on the short-sequence benchmark\protect\footnotemark. Average accuracy after training on the last task is reported. All results are averaged over 3 runs. For the T5 model, we follow few-shot CL settings as in \citet{qin2021lfpt5}.}
\end{table}

\footnotetext{Methods marked with $^{\ast}$ use soft prompt, while other methods train the entire model. For ProgPrompt, the results are reported by running their released code. 
DR denotes whether the data replay is required. $^{\Box}$, $^{\diamondsuit}$, $^{\dag}$ and $^{\ddag}$ mark the results from \citet{qin2021lfpt5}, \citet{de2019episodic}, \citet{huang2021continual}, and \citet{razdaibiedina2023progressive}, respectively.}

% \vspace{-1.7cm}
\subsection{Implementation Details}
Q-tuning is a model-backbone-agnostic approach that is applicable to any language model, such as the GPT series \citep{openai2023gpt4}, regardless of their sizes. 
Due to resource constraints, following \citet{razdaibiedina2023progressive}, we use two popular language models including the encoder-decoder T5 model \citep{raffel2020exploring} and encoder-only BERT model \citep{devlin2018bert} in our experiments. 
For all the T5 experiments, 
we adopt the T5-large model with the text-to-text formulation, where classification labels are mapped into words (e.g. 0/1 will be mapped as ``True''/``False''). 
%We train the prompt embeddings for the T5 model using the proposed Q-tuning loss. 
For all the BERT experiments, we use the BERT-base model as in IDBR and MBPA++ methods \cite{huang2021continual, de2019episodic}. 
% Following \citet{devlin2018bert}, 
We use the representation of the first token $h_{[CLS]}$ to predict the class of the input text, where $h_{[CLS]}$ is encoded by a beginning-of-a-sentence symbol [CLS].  
Following \citet{razdaibiedina2023progressive}, we apply a linear head including a linear transformation parameterized by $\alpha$ and a softmax function to obtain the classification probabilities over classes 
$k \in \{1 ... \mathcal{K}\}$:
$ p(y = k | h) = \frac{\exp (\alpha_k h_{[CLS]})}{\sum_{y \in \mathcal{K}} \exp (\alpha_y h_{[CLS]})} $. %In BERT experiments, we set single-task prompt length in Q-prompt to 10 tokens. 
% in addition to the prompt embeddings
The linear head is trained separately for each task.  
In our experiments, the prompt length per task is set to 10, and each prompt is parameterized by a two-layer  MLP\footnote{The experimental details are reported in Appendix~\ref{appendix:experiment}.}.

%  Progressive Prompts to the recent CL approaches implemented specifically with BERT, we use the pre-trained BERT-base model as in IDBR and MBPA++ methods \citep{huang2021continual, de2019episodic}. For comparison with LFPT5 approach \citep{qin2021lfpt5} we use pre-trained T5-large\footnote{Although the original prompt tuning approach reports better performance with T5 v1.1 compared to T5, several subsequent works find version v1.1. less stable for prompt tuning compared to the original T5 and report worse performance \citep{karimi2021compacter}. Therefore, in this work we use the original T5 model.}
% \textbf{Prompt length} For all Progressive Prompt experiments on BERT, we set single-task prompt length to 20 tokens and apply prompt reparameterization with a two-layer residual MLP. For T5 experiments, we use single-task prompt length to 10 tokens in case of long-sequence experiments and 70 in case of T5 experiments (so that total Progressive Prompt length more closely matches LFPT5 prompt length of 300). We report more experimental details in Appendix~\ref{appendix:experiment}.

% \vspace{-1cm}
\section{Experimental Results}
We report Q-tuning performance on T5-large and BERT-base models and compare it to previous CL and prompt tuning approaches. 
We evaluate the methods after training on all tasks and report the averaged test set accuracy across all tasks. The detailed experimental metrics are reported in Appendix \ref{appendix:datasets}. All the experiments are conducted on a single 32GB NVIDIA V100 GPU.

% \begin{wrapfigure}[23]{r}{0.6\textwidth}
\begin{table*}[htbp]
% \vspace{-0.2cm}
\centering
\fontsize{9}{0}\selectfont
\small
% \hspace{-0.1cm}
\scalebox{0.85}{
\begin{tabular}{cc|cccc|cccc}
% \toprule
\toprule
 \multicolumn{2}{c|}{\multirow{2}{*}{\textbf{Method}}}&\multicolumn{4}{c|}{\multirow{1}{*}{\textbf{T5-large}}}&\multicolumn{4}{c}{\textbf{BERT-base}}\\
& & \textbf{Order 8} & \textbf{Order 9}  & \textbf{Order 10} & \textbf{Average} & \textbf{Order 8} & \textbf{Order 9}  & \textbf{Order 10}& \textbf{Average} \\
\midrule
\multicolumn{2}{c|}{Continual Finetune} &9.3 &9.5 & 10.4 & 9.7 & 29.9 & 30.5 & 33.6 & 31.3    \\ 
\multicolumn{2}{c|}{Prompt Tuning$^{\ast}$} & 9.7 & 24.4 & 12.2 & 17.4 & - & -& -& -   \\ 
\multicolumn{2}{c|}{Data Replay} &46.0 & 50.3 & 34.6 & 43.6 & 34.9 & 39.3& 34.9& 36.4   \\ 
\multicolumn{2}{c|}{LFPT5$^{\ast}$} &54.7 & 54.1 & 54.2 & 54.3 & - & -& -& -   \\ 
\multicolumn{2}{c|}{Per-task Prompt$^{\ast}$} & 69.9 &69.9 &69.9 & 69.8 & 50.6 & 50.6& 50.6& 50.6\\
\multicolumn{2}{c|}{IDBR} & - & - & - & - & 39.7 & 37.9& 32.9& 36.8   \\ 
\multicolumn{2}{c|}{ProgPrompt$^{\ast}$} &75.4 &76.6 &76.7 & 76.2 & 55.3 & 53.3& {51.9} & 53.5   \\ 
\midrule
\multicolumn{1}{c|}{\multirow{3}{*}{\makecell{Ours$^{\ast}$\\($\Qc_{\text{size}}=5$)}}} & Random &76.4 &77.3 &76.1 & 76.6 & 53.6& 53.2 & 51.1 & 52.6 \\
\multicolumn{1}{c|}{}& FIFO & 76.5& 77.2& 76.7&   76.8 & 54.5& 53.8 & 51.8 & 53.4  \\
\multicolumn{1}{c|}{}& DQ-PCA & {77.5}& {78.8}& {77.8} & {78.0} & {55.6}&{56.0} &51.8 & {54.5} \\
\midrule
\multicolumn{1}{c|}{\multirow{3}{*}{\makecell{Ours$^{\ast}$\\($\Qc_{\text{size}}=10$)}}} & Random & 76.7 & 77.2 &76.5 & 76.8 & 54.7& 54.2 & 52.8 & 53.9 \\
\multicolumn{1}{c|}{} & FIFO & 77.0& 77.1& 76.7&   76.9 & 54.6& 54.2 & \textbf{52.9} & 53.9  \\
\multicolumn{1}{c|}{} & DQ-PCA & \textbf{78.3}& \textbf{79.7}& \textbf{78.7} & \textbf{78.9} & \textbf{56.5}& \textbf{56.2} & 52.6 & \textbf{55.1}  \\
\midrule
\multicolumn{2}{c|}{\multirow{1}{*}{\makecell{ProgPrompt + Aggregation +  $\theta_{\Pc^{\ast}}$}}} & 79.0& 79.1& 78.1 & 78.7 & 55.3& 55.2 & 54.5 & 55.0 
\\
\multicolumn{2}{c|}{MTL} &70.7 &70.7 &70.7 & 70.7 & 56.9 & 56.9& 56.9& 56.9   \\ 
\bottomrule
\end{tabular}
}
\captionof{table}{Average test set performance of Q-tuning and prior approaches on long-sequence experiments with 15 text classification tasks in different orders. In the experiments\protect\footnotemark, we use the few-shot CL by setting 20 samples per class. 
All the results are averaged over 3 runs.}
\label{tab:long_seq}
% \vspace{-0.5cm}
\end{table*}

% \end{wrapfigure}

% \vspace{-0.5cm}
\subsection{Results on Few-shot CL Benchmarks}
\paragraph{Short-sequence Experiments:} 
Following ProgPrompt \cite{razdaibiedina2023progressive}, we evaluate the performance of Q-tuning on the standard short-sequence CL benchmarks with few-shot learning settings, where Orders 1$\sim$3 and Orders 4$\sim$7 are evaluated with the T5 and BERT models, respectively.  
Since these sequential tasks only consist of four or five disjoint datasets, we set $\Qc_{\text{size}}=5$ for the Q-prompt without utilizing the DQ-PCA rule. In Table~\ref{tab:table_t5}, we compare Q-tuning with the existing CL, prompt tuning, and continual prompt tuning approaches using the T5 model. Q-tuning outperforms all the CL approaches by a large margin, achieving 76.2\% accuracy on average of all the orders. 
Q-tuning increases the accuracy by 1.7\% (from 74.5\% to 76.2\%) compared to ProgPrompt, the SOTA approach of continual prompt tuning. Q-tuning also surpasses the ``Per-task Fintune'' by 6.2\% on average, demonstrating the efficacy of the proposed queue aggregation and shared prefix prompt approach in enhancing the FKT capability. Table~\ref{tab:table_bert} reports the results on the BERT-base model that verify a consistent improvement.

% \vspace{-0.3cm}
\paragraph{Long-sequence Experiments:}
In Table \ref{tab:long_seq}, we compare the Q-tuning with the baseline approaches on the long-sequence CL benchmark, including Orders 8$\sim$10 using the T5-large and the BERT-base models. 
These experiments consist of 15 tasks in three different orders. We follow the few-shot CL setting as in  \citet{razdaibiedina2023progressive} by selecting 20 samples per class. The row of ``ProgPrompt + $\theta_{\Pc^{\ast}}$'' denotes the results by adding the shared prompt to ProgPrompt, while maintaining its complete prompt list of 15 prompts. We can observe that setting the maximum length of the Q-prompt to 5 using DQ-PCA only leads to a 0.7\% accuracy drop (from 78.7\% to 78.0\%) compared with the ``ProgPrompt + Aggregation +  $\theta_{\Pc^{\ast}}$''. Moreover, we even observe a 0.2\% accuracy increase over the full prompt when setting the maximum Q-prompt length to 10. 
This indicates the capability of DQ-PCA to protect essential knowledge when trimming the Q-prompt. 

\footnotetext{MTL denotes multi-task learning that fintunes the model using all the datasets from different tasks. 
Methods marked with $\ast$ only train a soft prompt while freezing the pretrained model, other methods train the entire model. The ``Full Prompts'' denotes remaining all prompts in queue by setting $\Qc_{\text{size}}=15$.}

In addition, in Table \ref{tab:long_seq}, we compare the proposed DQ-PCA with the two naive queue eviction rules including random dropping, first in and first out (FIFO). As the results suggest, evicting Q-prompt by random dropping and FIFO yields very close performance, which both blindly shrink the Q-prompt without considering the relevance across different promtops. Unlike them, our DQ-PCA shrinks the Q-prompt by preserving the most informative prompts, thus clearly outperforming those two naive strategies. The results on the T5-large and the BERT-base models collectively witness the superiority of using our DQ-PCA.

% As shown in Table \ref{tab:long_seq}, the full Q-prompt outperforms ProgPrompt by 2.5\% in accuracy on average, from 76.2\% to 78.7\% with the T5 model, which demonstrates again the efficacy of the queue aggregation and shared prefix prompt. 

\begin{table}[htbp]
% \vspace{-20pt}
\centering
\fontsize{8}{0}\selectfont
\small
\scalebox{0.85}{
\begin{tabular}{lccccc}
% \toprule
\toprule
 \multicolumn{2}{c}{\multirow{2}{*}{\textbf{Method}}}&\multicolumn{3}{c}{\multirow{1}{*}{\textbf{T5-large}}} & \\
% & \textbf{T5-large} \\
&&Order 11& Order 12& Order 13 & \multirow{1}{*}{\textbf{Average}}\\
\midrule
% \multicolumn{4}{c}{\textbf{Extremely Long Sequence (70 tasks)}}\\
% \midrule
\multicolumn{2}{c}{ProgPrompt\protect\footnotemark} & Fail& Fail & Fail &-\\
\midrule
  \multicolumn{2}{c}{Per-task Prompt} & 60.4 & 60.4 & 60.4 & 60.4\\
  \multicolumn{2}{c}{Shared Prompt} & 62.4 & 62.7 & 63.1 & 62.7\\
 \midrule
 \multicolumn{2}{c}{\makecell{Q-tuning\\($\Qc_{\text{size}}=10$)}}& \textbf{90.9} &  \textbf{90.6}& \textbf{90.8} & \textbf{90.8}\\
\bottomrule
\end{tabular}
}
\captionof{table}{\small{Results on extremely long sequence experiments (70 tasks). All results are averaged over 3 runs.}}
\label{tab:extreme_long_exp}
% \vspace{-15pt}
\end{table}

% \midrule
%   \multicolumn{2}{c}{Per-task Prompt} & 60.4\tiny{$\pm$2.1} & 60.4\tiny{$\pm$2.1} & 60.4\tiny{$\pm$2.1} & 60.4\\
%   \multicolumn{2}{c}{Shared Prompt} & 62.4\tiny{$\pm$1.3} & 62.7\tiny{$\pm$0.9} & 63.1\tiny{$\pm$0.7} & 62.7\\
%  \midrule
%  \multicolumn{2}{c}{\makecell{Q-tuning\\($\Qc_{\text{size}}=10$)}}& \textbf{90.9}\tiny{$\pm$0.3} &  \textbf{90.6}\tiny{$\pm$0.4}& \textbf{90.8}\tiny{$\pm$0.2} & \textbf{90.8}\\
% \bottomrule
% \end{tabular}

% \vspace{-10pt}
\paragraph{Lifelong Learning Experiments:}
Lastly, we use extremely long task sequences to mimic lifelong learning scenarios. 
Table~\ref{tab:extreme_long_exp} reports the results of Q-tuning on Orders 11$\sim$13 including three {\em random} permutations of 70 {\em disjoint} tasks. 
Training ProgPrompt will fail due to out of memory caused by the accumulation of prompts\footref{progfail_footnote}\footnotetext{\label{progfail_footnote}When using the same batch size as Q-tuning, ProgPrompt encounters failure during training after the $15$-th task.}. 
Compared to the per-task prompt tuning, Q-tuning has gained considerable performance benefits ({30.4\%} accuracy improvement on average from 60.4\% to 90.8\%). 
This can be attributed to 1) the improved FKT by applying Q-prompt knowledge aggregation, 2) the effective trimming of Q-prompt using DQ-PCA to enable the training of long sequence of tasks, and 3) the use of MR regularization and shared prefix prompt to avoid the accumulated information loss caused by the Q-prompt trimming. 
We also compare Q-tuning with training using a global shared prompt and a per-task prompt plus the MR regularization for each task without maintaining a queue. 
To ensure a fair comparison, we set the length of the shared prompt to be identical to Q-tuning, \ie $l\times\Qc_{\text{size}}$. Although the accuracy of the shared prompt is better than the per-task prompt tuning ({2.3\%} improvement on average from {60.4\%} to {62.7\%}), it is outperformed by Q-tuning by {28.1\%} ({62.7\%} to {90.8\%}) on average. 
This indicates that, although the Q-prompt and the shared prefix prompt serve the same purpose of aggregating knowledge for better FKT, it is beneficial to keep both components.

\begin{figure}[htbp]
% \vspace{-0.3cm}
% \setlength{\tabcolsep}{3.4pt}
\fontsize{9}{6}\selectfont
\begin{minipage}{.48\textwidth}
\centering
\setlength{\tabcolsep}{3pt}
\fontsize{10}{6}\selectfont
\small
\scalebox{0.8}{
% \def\arraystretch{0.98}
% \hspace{-10pt}
\begin{tabular}{ccccc}
\toprule
\thead{Forward Transfer\\(Target Task)} & \thead{Q-prompt\\ (Full)} & \thead{Q-prompt \\ ($\Qc_{\text{size}}=5$)} & \thead{Q-prompt \\($\Qc_{\text{size}}=10$)} & \thead{ Prompt Tuning}  \\
\midrule
 Task 11 & 98.1 & 97.8 ({\bf \tiny{$\downarrow$ 0.3\%}}) & 98.2 ({\bf \tiny{$\uparrow$ 0.1\% }})& 97.1 ({\bf \tiny{$\downarrow$ 1.0\%}}) \\
Task 12 &86.2& 83.9 ({\bf \tiny{$\downarrow$ 2.3\%}}) & 86.1 ({\bf \tiny{ $\downarrow$ 0.1\%}})& 72.6 ({\bf \tiny{$\downarrow$ 13.6\%}})\\
Task 13 & 56.6 & 54.9 ({\bf \tiny{$\downarrow$ 1.7\%}}) & 56.2 ({\bf \tiny{$\downarrow$ 0.4\%}}) & 49.8 ({\bf \tiny{$\downarrow$ 6.8\%}})\\
Task 14 & 50.4& 50.3 ({\bf \tiny{$\downarrow$ 0.1\%}})& 50.5 ({\bf \tiny{$\uparrow$ 0.1\%}})& 47.6 ({\bf \tiny{$\downarrow$ 2.8\%}})\\
Task 15& 69.4 & 68.9 ({\bf \tiny{$\downarrow$ 0.5\%}}) & 69.1 ({\bf \tiny{$\downarrow$ 0.3\%}})&68.1 ({\bf \tiny{$\downarrow$ 1.3\%}})\\
\midrule
Average & 72.1 & 71.2 ({\bf \tiny{$\downarrow$ 0.9\% }}) & \textbf{72.0} ({\bf \tiny{$\downarrow$ 0.1\%}})&67.0 ({\bf \tiny{$\downarrow$ 5.1\%}}) \\
\bottomrule
\end{tabular}
}
\captionof{table}{\small{Forward knowledge transfer results of Order 9 using 20 samples/class. Results are averaged over 3 runs.}}
\label{table:transfer_table}
\end{minipage}
\begin{minipage}{0.48\textwidth}
\centering
%  \vspace{-48pt}
\vspace{7pt}
\setlength{\tabcolsep}{5pt}
\fontsize{8}{6}\selectfont
\small
\scalebox{0.75}{
% \def\arraystretch{0.98}
% \hspace{-5pt}
\begin{tabular}{c|ccc|cccc}
% \toprule
\toprule
 \multirow{2}{*}{\textbf{Sequence}}&\multicolumn{3}{c|}{\multirow{1}{*}{\textbf{Method}}}&\multicolumn{3}{c}{{\multirow{1}{*}{\textbf{Num. samples}}}}&\\
 &Q-prompt & Aggregation & $\theta_{\Pc^{\ast}}$ & 16 & 200 & 1000 & \textbf{Average}\\
\midrule
\multirow{3}{*}{Short}&\cmark &\xmark &\xmark & 74.5 & 79.8& 79.8& 78.0  \\ 
&\cmark& \cmark&\xmark  & 75.2 & 80.9& 80.4& 78.8   \\
&\cmark& \xmark& \cmark & 75.1 & 80.6& 80.9 & 78.9  \\
&\cmark& \cmark& \cmark&  \textbf{76.2} & \textbf{81.2}& \textbf{81.9} & \textbf{79.7}  \\
\midrule
 \multirow{2}{*}{\textbf{Sequence}}&\multicolumn{3}{c|}{\multirow{1}{*}{\textbf{Method}}}&\multicolumn{3}{c}{{\multirow{1}{*}{\textbf{Num. samples}}}}&\\
 &Q-prompt & Aggregation & $\theta_{\Pc^{\ast}}$ & 20 & 200 & 1000 & \textbf{Average}\\
\midrule
\multirow{4}{*}{Long}&\cmark &\xmark &\xmark & 76.7 & 80.8& 80.8 & 79.4 \\ 
&\cmark& \cmark&\xmark  & 77.2 & 81.1& 82.1 & 80.2  \\
&\cmark&\xmark & \cmark & 77.4 & 81.1& 82.3  & 80.3 \\
&\cmark& \cmark& \cmark& \textbf{78.9} & \textbf{81.9}& \textbf{83.3} & \textbf{81.4} \\
\bottomrule
\end{tabular}
}
\captionof{table}{\small{Ablation studies on the Q-prompt aggregation and shared prefix prompt\protect\footnotemark. Results are averaged over 3 runs.}}
\label{tab:abaltionstud_average}
\end{minipage}
% \vspace{-15pt}
\end{figure}
\footnotetext{For long sequence, we set $\Qc_{\text{size}} = 10$. More detailed results of each order are reported in Appendix~\ref{appendix:more_results}.}

\subsection{Ablation Study and Analysis}
% \vspace{-5pt}
In this section, we evaluate our approach's performances in various aspects, including its capability of fulfilling FKT, adapting previous prompts based on their relevance to the new task using the Q-prompt aggregation, and maintaining global knowledge sharing using a shared prefix prompt.  
% \addtocounter{footnote}{1}
% \footnotetext[7]{
% %  We report the results starting from the 11th task. 
%  The ``Q-prompt w/ $\Qc_{\text{size}=5}$'' and ``Q-prompt w/ $\Qc_{\text{size}=10}$'' start throwing off information from the 6-th and 11-th tasks, respectively. 
% ``Q-prompt (Full)'' denotes remaining all prompts in queue for Q-tuning. 
% % The ``Prompt Tuning'' denotes the standard prompt tuning.
% }

\textbf{Forward Knowledge Transfer:}\, In Table~\ref{table:transfer_table}, we evaluate the FKT performance of the trimmed Q-prompt. We train three different Q-prompts including the ``Full'', ``$\Qc_{\text{size}}=5$'' and ``$\Qc_{\text{size}}=10$'', where the ``Full'' denotes keeping the complete Q-prompt without the De-Q operation. All these Q-prompts are continuously trained on the first 10 tasks of Order 9. Then we separately evaluate the FKT performance of these Q-prompts on five remaining target tasks. 
As a reference, we also train a single prompt (denoted by ``Prompt Tuning'' whose token length is set the same as the total length of the full Q-prompt) on each target task. First of all, full Q-prompt substantially outperforms ``Prompt Tuning'', demonstrating our approach's capability in fulfilling FKT whereas ``Prompt Tuning'' does not leverage any information from other tasks.
Moreover, compared to the full Q-prompt, the trimmed Q-prompt only has a minor performance drop. For example, setting $\Qc_{\text{size}}=10$ only leads to 0.1\% accuracy decrease (from 72.1\% to 72.0\%). This proves that trimmed Q-prompt is able to maintain FKT at the same level as the full Q-prompt, despite previous prompts being trimmed.

\begin{table}[hbpt]
% \vspace{-0.3cm}
\centering
\begin{minipage}[t]{1\linewidth}
\centering
\includegraphics[width=0.9\textwidth]{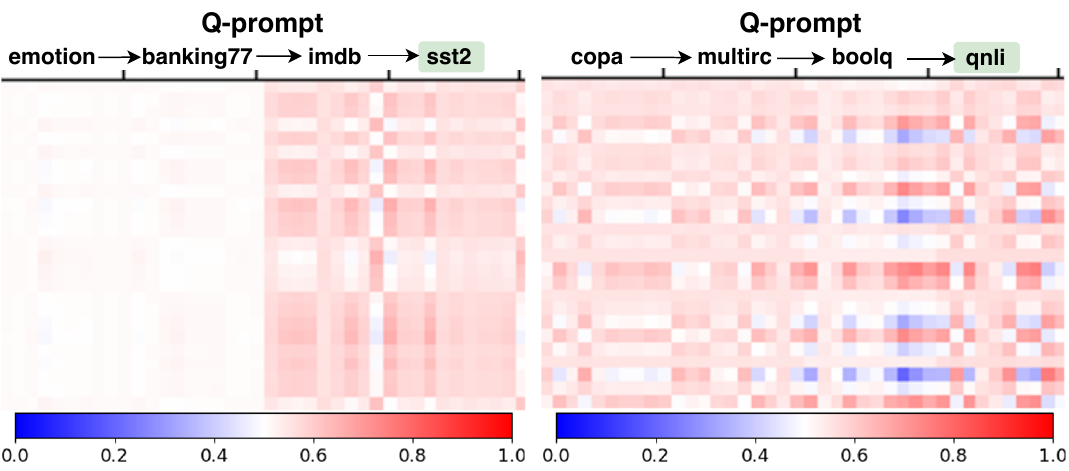}
\captionof{figure}{\footnotesize Visualization of aggregation matrix. }
\label{fig:weight_matrix}
\end{minipage}
\\
\vspace{0.3cm}
\begin{minipage}[t]{1\linewidth}
\centering
\setlength{\tabcolsep}{2pt}
\fontsize{8}{0}\selectfont
\small
\scalebox{0.9}{
\begin{tabular}{ccccccc}
% \toprule
\toprule
 \multicolumn{3}{c}{\multirow{1}{*}{\textbf{Method}}}&\multicolumn{3}{c}{\multirow{1}{*}{\textbf{T5-large}}} & \\
$\theta_{\Pc^{\ast}}$ & Aggregation & $\Lc_{\text{MR}}$&Order 11& Order 12& Order 13 & \multirow{1}{*}{\textbf{Average}}\\
\midrule
 \xmark&\xmark &\xmark&86.8 & 87.3 & 87.7 & 87.3 \\
 \cmark&\xmark &\xmark & 87.6 & 88.1 & 88.7 & 88.1\\
 \cmark&\cmark &\xmark & 89.8 & 89.4&90.1& 89.8\\
 \cmark&\cmark &\cmark& \textbf{90.9} &  \textbf{90.6}& \textbf{90.8} & \textbf{90.8}\\
\bottomrule
\end{tabular}
}
\captionof{table}{Ablation studies on the extremely long sequence experiments. Results are averaged over 3 runs.}
\label{tab:ablation_extreme_long_exp}
\end{minipage}
\end{table}

% {\textbf{Method}}}&\multicolumn{3}{c}{\multirow{1}{*}{\textbf{T5-large}}} & \\
% $\theta_{\Pc^{\ast}}$ & $\Lc_{\text{MR}}$&Order 11& Order 12& Order 13 & \multirow{1}{*}{\textbf{Average}}\\
% \midrule
%  \xmark&\xmark &86.8\tiny{$\pm$0.6} & 87.3\tiny{$\pm$0.3} & 87.7\tiny{$\pm$0.4} & 87.3 \\
%  \cmark&\xmark & 89.8\tiny{$\pm$0.2} & 89.4\tiny{$\pm$0.3} &90.1\tiny{$\pm$0.1}& 89.8\\
%  \cmark&\cmark & \textbf{90.9}\tiny{$\pm$0.3} &  \textbf{90.6}\tiny{$\pm$0.4}& \textbf{90.8}\tiny{$\pm$0.2} & \textbf{90.8}\\

% \vspace{-0.3cm}
\textbf{Q-prompt Aggregation:}\, Table~\ref{tab:abaltionstud_average} demonstrates the efficacy of the knowledge aggregation technique. In both the short and long task sequences, compared with the complete Q-prompt model (the fourth row), dropping the knowledge aggregation (the third row) leads to 0.8\% and 1.1\% accuracy drop in the short and long task sequences, respectively. 
In addition, in Fig.~\ref{fig:weight_matrix}, we visualize the trained weight matrix $\Wc$ to reflect the relevance of previously learned prompts to the current task. 
We can observe when learning the ``sst2'' task, the prompt from the ``imdb'' task contributes the most. This is because the two tasks are both for the movie review classification. The aggregation matrix uncovers their correlation and assigns more weights to the prompt of the ``imdb'' task. 
In contrast, for the ``qnli'' task, the aggregation matrix suggests an even contribution of each prompt in the queue. This is because all the tasks are of the Q\&A classification. 

\textbf{Shared Prefix Prompt and MR:}\, We conduct ablation studies to validate the efficacy of the shared prefix prompt and the MR regularization. As shown in Table~\ref{tab:abaltionstud_average}, 
% in both the short and long task sequences, 
by comparing the complete Q-prompt (the fourth row) and dropping the shared prefix prompt (the second row), we observe an accuracy drop of 0.9\% and 1.2\% in the short and long task sequences, respectively. This negative impact in the short sequence is weaker than that of the long task sequence. This is expected as the short task sequence does not utilize DQ-PCA to trim the Q-prompt, hence no information loss, which dilutes the effect of the shared prefix prompt. Furthermore, to evaluate the contribution of the MR regularization, we conduct the experiments on a long task sequence by setting $\mathcal{Q}_{\text{size}}=10$. Table~\ref{tab:ablation_extreme_long_exp} shows that dropping the MR regularization of the shared prefix prompt (from the third row to the second row) leads to a 1\% accuracy drop. 
In Appendix~\ref{appendix:more_results}, we report the performance using different values of $\eta$ for the MR regularization. 

\section{Conclusion}\label{Section:conclusion}
% \vspace{-0.1cm}
This paper introduces a new model-agnostic approach named Q-tuning, which can pave the way to achieving lifelong continual prompt tuning for present and future LMs with a rapid growth of parameters.  
% CF in continual learning, while allowing forward knowledge transfer and shedding light on life-long prompt tuning. 
In comparison with existing CL methods, Q-tuning maintains a low-cost prompt queue instead of storing a large number of task-specific parameters or saving old data samples for replay. 
Our extensive experiments demonstrate that Q-tuning outperforms existing continual learning, prompt tuning and continual prompt tuning methods on the standard CL benchmarks for text classification. 
In addition, we verify the effectiveness of Q-tuning on both short and long task sequences, including up to 70 tasks that mimic the case of lifelong learning. 

\textbf{Limitations:}\, Although Q-tuning demonstrates a strong FKT capability, it does not enable the backward knowledge transfer as both the model and the previous Q-prompts are frozen during the learning of a new task. 
Besides, Q-tuning requires the task identities to be known at test time. To address the more challenging CL scenario when the task identities are undisclosed at test time, for task $i$, we can assign a trainable query key $k^i$ to the corresponding Q-prompt $\Qc^{i}$ and jointly train $k^i$ to maximize the similarity between $k^i$ and the feature of each sample $x$ from task $i$. During test time, given an input $x^{\prime}$ with an unknown identity, we will first locate the Q-prompt that has the largest similarity between its key $k^j$ and the input $x^{\prime}$, and then we can use the retrieved Q-prompt $\Qc^{j}$ to infer $x^{\prime}$. We will address this problem in our future work. 

% inspired by \cite{wang2022dualprompt}

% The proposed Q-tuning will foster building more scalable and practical life-long learning system for large-scale models that have explosive growth of parameters. 

%
% You may include other additional sections here.

% \bibliographystyle{acl_natbib}
\bibliography{Xin_Async}

\appendix
\clearpage
{\large\textbf{Appendix}}

\section{Q-tuning Algorithm
\label{appendix:q_tuing_algorithm}}
% \begin{wrapfigure}{L}{0.5\textwidth}
% \begin{minipage}{0.5\textwidth}
\begin{algorithm}[htbp]
\fontsize{10}{6}\selectfont
\caption{Q-tuning Algorithm} 
\begin{algorithmic}
	\Input{Continually increased task set $\Tc$, Q-prompt $\Qc$ with a maximum capacity $C$, fixed pretrained model $\theta_{\Mcc}$, aggregation matrix $\Wc$ for $\Qc$, shared prefix prompt $\theta_{\Pc^{\ast}}$, memory factor $\eta$.}
    \Initialize{$ \Qc^{1}=\{\}$, randomly initialized $\theta^1_{\Pc^{\ast}}$ and $\theta^1_{\Pc}$, initialized $\Wc^1$ with an identity matrix.}
    % and $\Wc^{\ast}$ with identity matrices. }
	\For {continually coming task $i=1,2,\ldots$}
    \If {$i>C$}
	\State $\Qc \leftarrow \text{PCA-DQ}(\Qc)$ // De-Q (Eq.(\ref{eq_enqueue}))
	\Else 
	\State $\Qc \leftarrow \Qc \oplus \theta^i_{\Pc}$ // En-Q 
	\EndIf 
	\For {each batch sample from $\Tc^i$'s dataset}
	 \State $\theta^i_{\Pc} \leftarrow \theta^i_{\Pc} + \nabla_{\theta^i_{\Pc}} \Lc_{\Qc}(\theta^i_{\Pc^{\ast}},\theta^i_{\Pc},\Wc^i)$
	  \State $\Wc^{i} \leftarrow  \Wc^{i} + \nabla_{\Wc^{i}} \Lc_{\Qc}(\theta^i_{\Pc^{\ast}},\theta^i_{\Pc},\Wc^i)$

	\If {i=1}
	 	 \State $\theta^i_{\Pc^{\ast}} \leftarrow \theta^i_{\Pc^{\ast}} + \nabla_{\theta^i_{\Pc^{\ast}}} \Lc_{\Qc}(\theta^i_{\Pc^{\ast}},\theta^i_{\Pc},\Wc^i)$
	 	 %\State $\Wc^{\ast} \leftarrow  \Wc^{\ast} + \nabla_{\Wc^{_{\ast}}} \Lc_{\Pc^{\ast}}$
	 \ElsIf{$i>C$} 
    	     \State $\theta^i_{\Pc^{\ast}} \leftarrow \theta^i_{\Pc^{\ast}} + \nabla_{\theta^i_{\Pc^{\ast}}}[ \Lc_{\Qc}(\theta^i_{\Pc^{\ast}},\theta^i_{\Pc},\Wc^i) + \eta \Lc_{\text{MR}}(\theta^i_{\Pc^{\ast}})]$
	\EndIf
    \EndFor
\EndFor
\end{algorithmic} 
\label{q_tuning_algorithm}
\end{algorithm}
% \end{minipage}
% \end{wrapfigure}

\section{Mutual Information Estimation
\label{appendix:mutual_information}}

\begin{proposition}
  Let $p(x)$ and $p(y)$ represent two random variables, their mutual information satisfies 
  \begin{equation}
  {\fontsize{10}{0}\selectfont
    \begin{aligned}
        I(&p(x);p(y)):=\mathcal{D}_{\mathrm{JSD}}(\mathbf{J}||\mathbf{M})\\
        &\geq \mathbf{E}_{z\sim \mathbf{J}}\left[-\sigma(-\mathcal{F}_{\omega}(z))\right] - \mathbf{E}_{z^{\prime}\sim \mathbf{M}}\left[\sigma(\mathcal{F}_{\omega}(z^{\prime}))\right]
    \end{aligned}
}
  \label{jsd_divergence}
  \end{equation}
  where the joint $\mathbf{J} = p(x,y)$, $\mathbf{M} = p(x)p(y)$ is the product of the marginals,
  $\sigma(t)=\mathrm{log}(1+e^t)$, and $\mathcal{F}$ belongs to an arbitrary class of functions that can map $\mathbf{J}\rightarrow \mathbb{R}$ and $\mathbf{M}\rightarrow \mathbb{R}$.
\end{proposition}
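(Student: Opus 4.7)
The plan is to derive the inequality via the variational (Fenchel--Legendre) representation of $f$-divergences, in the style of Nguyen--Wainwright--Jordan and Nowozin et al.'s $f$-GAN framework, which is exactly the template Hjelm et al. use for Deep InfoMax's Jensen--Shannon mutual information estimator. Since the proposition defines $I(p(x);p(y))$ to be $\mathcal{D}_{\mathrm{JSD}}(\mathbf{J}\|\mathbf{M})$ directly, no separate argument is needed to connect the mutual information to the divergence; the entire task reduces to producing the softplus lower bound.

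First I would identify $\mathcal{D}_{\mathrm{JSD}}(\mathbf{J}\|\mathbf{M})$ as the $f$-divergence generated by the convex function $f_{\mathrm{JSD}}(u) = u\log u - (u+1)\log\frac{u+1}{2}$, so that $\mathcal{D}_{\mathrm{JSD}}(\mathbf{J}\|\mathbf{M}) = \mathbf{E}_{z\sim\mathbf{M}}[f_{\mathrm{JSD}}(d\mathbf{J}/d\mathbf{M}(z))]$. Because $f_{\mathrm{JSD}}$ is convex and lower semicontinuous, Fenchel--Legendre duality gives the pointwise identity $f_{\mathrm{JSD}}(u) = \sup_{t\in\mathrm{dom}(f^{\ast}_{\mathrm{JSD}})}\{ut - f^{\ast}_{\mathrm{JSD}}(t)\}$. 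Plugging this into the integral representation of the divergence and exchanging the supremum with integration (justified under standard measurability and integrability assumptions) yields the generic variational lower bound $\mathcal{D}_{\mathrm{JSD}}(\mathbf{J}\|\mathbf{M}) \geq \mathbf{E}_{\mathbf{J}}[T(z)] - \mathbf{E}_{\mathbf{M}}[f^{\ast}_{\mathrm{JSD}}(T(z))]$ for any measurable $T$ valued in $\mathrm{dom}(f^{\ast}_{\mathrm{JSD}})$. Restricting $T$ to the parameterized family obtained from a critic $\mathcal{F}_{\omega}$ composed with a suitable output activation can only shrink the sup, and therefore preserves the direction of the inequality.

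To recover the specific softplus form, I would compute the conjugate $f^{\ast}_{\mathrm{JSD}}(t) = -\log(2 - e^t)$ with admissible domain $t < \log 2$, and then insert the output reparameterization $T(z) = \log 2 - \sigma(-\mathcal{F}_{\omega}(z))$, which automatically respects the domain constraint since $\sigma(-\mathcal{F}_{\omega}(z)) > 0$. A direct substitution gives $f^{\ast}_{\mathrm{JSD}}(T(z)) = \mathcal{F}_{\omega}(z) + \sigma(-\mathcal{F}_{\omega}(z)) - \log 2$; combining this with the elementary softplus identity $\mathcal{F}_{\omega}(z) + \sigma(-\mathcal{F}_{\omega}(z)) = \sigma(\mathcal{F}_{\omega}(z))$ collapses the bound to $\mathcal{D}_{\mathrm{JSD}}(\mathbf{J}\|\mathbf{M}) \geq 2\log 2 + \mathbf{E}_{\mathbf{J}}[-\sigma(-\mathcal{F}_{\omega}(z))] - \mathbf{E}_{\mathbf{M}}[\sigma(\mathcal{F}_{\omega}(z'))]$. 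Dropping the nonnegative constant $2\log 2$ then yields exactly the inequality in the statement.

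The main obstacle I anticipate is the algebraic bookkeeping of the two $\log 2$ constants that emerge from the conjugate and from the chosen activation: they must be tracked carefully so one can decide whether to absorb them as a harmless positive slack in the bound, or, as in the Deep InfoMax convention, to treat them as a fixed additive normalization folded into the definition of the JSD estimator. A secondary technical subtlety is rigorously justifying the exchange of supremum and integration in the Fenchel step, which follows from a standard measurable selection argument but deserves mention. Everything else is a routine computation once the generator $f_{\mathrm{JSD}}$ and its conjugate are in hand.
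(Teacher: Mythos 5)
Your proposal is correct and follows essentially the same route as the paper's proof: both invoke the Nguyen--Wainwright--Jordan / Nowozin $f$-divergence variational bound, parameterize the critic through the JSD output activation $T(z)=\log 2-\sigma(-\mathcal{F}_\omega(z))$ (the paper's $g_f$), use the conjugate $g^\ast(t)=-\log(2-e^t)$, and then drop the residual $2\log 2$ constant to arrive at the softplus bound. The only differences are cosmetic — you state the generator $f_{\mathrm{JSD}}$ explicitly and flag the $2\log 2$ slack as a deliberate drop, whereas the paper pulls $g_f$ and $g^\ast$ from a lookup table and folds the constants in silently — so this is the same argument.
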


\begin{proof}
  According to the variational estimation of $f$-divergences \citep{Nguyen2010}, we have 
  \begin{equation}
  {\fontsize{9}{0}\selectfont
    \begin{aligned}
      \mathcal{D}_f(\mathbf{P}||\mathbf{Q})&=\int q(x)\sup\limits_{t\in \mathrm{dom}_{g^*}}{t \frac{p(x)}{q(x)}-g^*(t)} \mathrm{d}x\\
      &\geq \sup\limits_{\mathcal{V}\in F}\biggl(\int p(x)\mathcal{V}(x)\mathrm{d}x-\int q(x)g^*(\mathcal{V}(x))\mathrm{d}x\biggr)\\
      &=\sup\limits_{\mathcal{V}\in F}(\mathbb{E}_{x\sim \mathbf{P}}[\mathcal{V}(x)]-\mathbb{E}_{x\sim \mathbf{Q}}[g^*(\mathcal{V}(x))])
    \end{aligned}
 }
    \label{f_divergence_def}
  \end{equation}
  where the function $g^*$ is a convex conjugate function \citep{hiriart2004fundamentals,nowozin2016f} of a convex, lower-semicontinuous function. The function $g^*$ is defined as 
  \begin{equation}
    \begin{aligned}
      g^*(t) = \sup_{u\in \mathrm{dom}_f}\{ut-f(u)\}
    \end{aligned}
  \end{equation}
  We parameterize $\mathcal{V}$ using a neural network with parameters $w$ and write it as $\mathcal{V}_{\omega}$. 
  We assume the form of the function $\mathcal{V}_{\omega}=g_f(\mathcal{F}_{\omega}(x))$. 
  Given two probability distributions $\mathbf{J} = p(x,y)$ and $\mathbf{M} = p(x)p(y)$, their $f$-divergence satisfies:
  \begin{equation}
  {\fontsize{10}{0}\selectfont
    \begin{aligned}
      \mathcal{D}_f(\mathbf{J}||\mathbf{M}) = & \sup\limits_{\mathcal{F}_{\omega}}(\mathbb{E}_{z\sim \mathbf{J}}[g_f(\mathcal{F}_{\omega}(z))] \\
      &-\mathbb{E}_{z^{\prime}\sim \mathbf{M}}[g^*(g_f(\mathcal{F}_{\omega}(z^{\prime})))])
    \end{aligned}
}
    \label{paramized_f_divergence}
  \end{equation}
  where $g_f$ is an activation function specific to the $f$-divergence used. 
  Table~\ref{activation_tab} provides the commonly used $g_f$ and the convex conjugate function $g^*$. 
  According to this table, for the $\mathrm{JSD}$ based divergence, we have $g_f(x)=\mathrm{log}(2)-\mathrm{log}(1+\mathrm{exp}(-x))$ 
  and $g^*(x)=-\mathrm{log}(2-\mathrm{exp}(x))$. By substituting them into Eq.~(\ref{paramized_f_divergence}), 
  we have 
  \begin{equation}
  {\fontsize{10}{0}\selectfont
    \begin{aligned}
      \mathbb{E}_{z\sim \mathbf{J}}\left[g_f(\mathcal{F}_{\omega}(z))\right] &= \mathbb{E}\left[\mathrm{log}2 - \mathrm{log}(1+\mathrm{exp}(-\mathcal{F}_{\omega}(z)))\right]\\
      &= \mathbb{E}_{z\sim \mathbf{J}}\left[\mathrm{log}2-\sigma(-\mathcal{F}_{\omega}(z))\right]
    \end{aligned}
}
    \label{f_divergence_p1}
  \end{equation}
  \begin{equation}
  {\fontsize{9}{0}\selectfont
    \begin{aligned}
      \mathbb{E}_{z^{\prime}\sim \mathbf{M}}&\left[g^*(g_f(\mathcal{F}_{\omega}(z^{\prime})))\right] \\
      &= \mathbb{E}_{z^{\prime}\sim \mathbf{M}}\left[-\mathrm{log}(2-\mathrm{exp}^{\mathrm{log}2-\mathrm{log}(1+\mathrm{exp}(-\mathcal{F}_{\omega}(z^{\prime})))})\right]\\
      &= \mathbb{E}_{z^{\prime}\sim \mathbf{M}}\left[-\mathrm{log}(2-2(1+\mathrm{exp}(-\mathcal{F}_{\omega}(z^{\prime}))^{-1}))\right]\\
      &= \mathbb{E}_{z^{\prime}\sim \mathbf{M}}\left[-\mathrm{log}(2\frac{\mathrm{exp}(-\mathcal{F}_{\omega}(z^{\prime}))}{1+\mathrm{exp(-\mathcal{F}_{\omega}(z^{\prime}))}})\right]\\
      % &= \mathbb{E}_{z^{\prime}\sim \mathbf{M}}\left[-\mathrm{log}\frac{2\mathrm{exp}(-\mathcal{F}_{\omega}(z^{\prime}))\mathrm{exp}(\mathcal{F}_{\omega}(z^{\prime}))}{\mathrm{exp}(\mathcal{F}_{\omega}(z^{\prime}))+\mathrm{exp}(-\mathcal{F}_{\omega}(z^{\prime}))\mathrm{exp}(\mathcal{F}_{\omega}(z^{\prime}))}\right]\\
      &= \mathbb{E}_{z^{\prime}\sim \mathbf{M}}\left[-\mathrm{log}(\frac{2}{\mathrm{exp}(\mathcal{F}_{\omega}(z^{\prime}))+1})\right]\\
      &= \mathbb{E}_{z^{\prime}\sim \mathbf{M}}\left[-(\mathrm{log}2 - \mathrm{log}(\mathrm{exp}(\mathcal{F}_{\omega}(z^{\prime}))+1))\right]\\
      &= \mathbb{E}_{z^{\prime}\sim \mathbf{M}}\left[-\mathrm{log}2 + \sigma(\mathcal{F}_{\omega}(z^{\prime}))\right]
    \end{aligned}
    }
    \label{f_divergence_p2}
  \end{equation}
Combining Eq.~(\ref{f_divergence_p1}) and Eq.~(\ref{f_divergence_p2}), we can rewrite Eq.~(\ref{paramized_f_divergence}) as a $\mathrm{JSD}$-divergence based form:  
\begin{equation}
{\fontsize{9}{0}\selectfont
  \begin{aligned}
    \mathcal{D}_{\mathrm{JSD}}(\mathbf{J}||\mathbf{M}) = & \sup\limits_{\mathcal{F}_{\omega}}(\mathbb{E}_{z\sim \mathbf{J}}\left[\mathrm{log}2\right]+ \mathbb{E}_{z\sim \mathbf{J}}\left[-\sigma(-\mathcal{F}_{\omega}(z))\right]\\
  &+\mathbb{E}_{z^{\prime}\sim \mathbf{M}}\left[\mathrm{log}2\right] -\mathbb{E}_{z^{\prime}\sim \mathbf{M}}\left[\sigma(\mathcal{F}_{\omega}(z^{\prime}))\right])\\
  \geq &\mathbb{E}_{z\sim \mathbf{J}}\left[-\sigma(-\mathcal{F}_{\omega}(z))\right]-\mathbb{E}_{z^{\prime}\sim \mathbf{M}}\left[ \sigma(\mathcal{F}_{\omega}(z^{\prime}))\right]
  \end{aligned}
}
\label{paramized_jsd_divergence}
\end{equation}
\end{proof}

\begin{table*}[htbp]
  \centering
  \setlength{\tabcolsep}{2pt}
    \fontsize{14}{0}\selectfont
    \small
  \begin{tabular}{cccccc}
    \toprule
    Name & Output activation $g_f$ & $\mathbf{dom}_{g^\star}$ & Conjugate $g^{\star}(t)$  \\
    \midrule
    Kullback-Leibler (KL) & $v$ & $\mathbb{R}$ & $\mathrm{exp}(t-1)$ \\
    Reverse KL & $-\mathrm{exp}(-v)$ & $\mathbb{R}_-$ & -1-$\mathrm{log}(-t)$ \\
    Pearson $\chi^2$  & $v$ & $\mathbb{R}$ & $\frac{1}{4}t^2+t$ \\
    Square Hellinger & $1-\mathrm{exp}(-v)$ & $t<1$ & $\frac{t}{1-t}$ \\
    Jensen-Shannon & $\mathrm{log}(2)-\mathrm{log}(1+\mathrm{exp}(-v))$ & $t<\mathrm{log}(2)$ & $-\mathrm{log}(2-\mathrm{exp}(t))$\\
    \bottomrule
  \end{tabular}
\caption{Recommended final layer activation functions and their conjugate functions. This table comes from \citet{nowozin2016f}.}
\label{activation_tab}
\end{table*}

\section{More Implementation Details}

\subsection{Datasets and Metrics}
\label{appendix:datasets}
We use 21 public datasets, of which 15 datesets are the same as ProgPrompt \cite{razdaibiedina2023progressive} for our experiments. 
Table~\ref{tab:datasets_table} reports the details of the 21 datasets, along with their evaluation metrics. 
Overall, we use datasets from CL benchmark \citep{zhang2015character}, GLUE \citep{wang2018glue} and SuperGLUE \citep{wang2019superglue} benchmarks, and IMDB movie reviews dataset.
We use the Banking77 dataset \citep{casanueva2020efficient} and Emotion dataset \citep{saravia2018carer} for the extremely long 70-task experiments. Following the common practice, for tasks that have two evaluation metrics, we use the average of the two as the final performance metric. 

To mimic the life-long learning, we add WNLI, COLA, and QNLI from the GLUE benchmark, WSC from the SuperGLUE benchmark, the Banking77 dataset \citep{casanueva2020efficient} and the Emotion dataset  \citep{saravia2018carer} to form an extremely long sequence including 70 tasks. In the 70-task experiments, we split the DBpedia set into 7 \textbf{disjoint} tasks, the Yahoo set into 5 \textbf{disjoint} tasks, and the Banking77 set into 38 \textbf{disjoint} tasks (removing 1 class), and the Emotion dataset into 3 \textbf{disjoint} tasks, where each task has two 2 classes. These divided 53 subsets plus the rest 17 datasets form the final 70-task dataset. 
Following \cite{razdaibiedina2023progressive}, for each task, we randomly select 500 samples per class from the training set for validation, and use early stopping according to the validation accuracy on all seen tasks.

\begin{table*}[htbp]
\centering
\fontsize{14}{0}
\small
\captionof{table}{The details of 21 datasets used in our experiments. NLI denotes natural language inference, QA denotes questions and answers task, and EM denotes exact match scoring. The first five tasks are used to form the standard CL benchmark, all other tasks are used in our long-sequence experiments.}
\scalebox{1}{
\begin{tabular}{l|llllc}
\toprule
 \textbf{Dataset name} & \textbf{Category} & \textbf{Task} &  \textbf{Domain} & \textbf{Metric} & \textbf{Classes} \\
 \midrule
 1. YP & CL benchmark & sentiment analysis & YP reviews & accuracy & 5 \\
 2. Amazon & CL benchmark & sentiment analysis & Amazon reviews & accuracy & 5 \\
 3. DBpedia & CL benchmark & topic classification & Wikipedia & accuracy &14\\
 4. Yahoo & CL benchmark & QA & Yahoo Q\&A & accuracy&10 \\
 5. AG News & CL benchmark & topic classification & news & accuracy & 4 \\
 6. MNLI & GLUE & NLI & various & accuracy & 3\\
 7. QQP & GLUE & paraphrase detection & Quora & accuracy \& F1 & 2 \\
 8. RTE & GLUE & NLI & news, Wikipedia & accuracy & 2 \\
 9. SST2 & GLUE & sentiment analysis & movie reviews & accuracy & 2\\
 10. WiC & SuperGLUE & word sense disambiguation & lexical databases & accuracy & 2\\
 11. CB & SuperGLUE & NLI & various & accuracy & 2 \\
 12. COPA & SuperGLUE & QA & blogs, encyclopedia & accuracy & 2 \\
 13. BoolQ & SuperGLUE & boolean QA & Wikipedia & accuracy & 2\\
 14. MultiRC & SuperGLUE & QA & various & F1 \& EM & 2\\
 15. IMDB & Other & sentiment analysis & movie reviews & accuracy & 2 \\
 16. WNLI & GLUE & NLI & various & accuracy & 2 \\
 17. COLA & GLUE & NLI & books, journal articles & accuracy & 2 \\
 18. QNLI & GLUE & QA & Wikipedia & accuracy & 2 \\
 19. WSC & SuperGLUE & NLI & various & accuracy & 2 \\
 20. Banking77 & Other & intent detection & banking& accuracy & 77 \\
 21. Emotion & Other & emotion detection & Twitter& accuracy & 6 \\
 \bottomrule
\end{tabular}
}
\label{tab:datasets_table}
\end{table*}

\subsection{Task sequence orders}
We report the task orders used in our experiments across the T5 and BERT models in Table~\ref{tab:seq} below, where Orders 1-10 are the same as ProgPrompt \citep{razdaibiedina2023progressive}. The Orders 11-13 are created by \textbf{randomly permuting} the collected 70 disjoint datasets to mimic the lifelong learning of continuously incoming unseen tasks. 

\label{appendix:orders}
%\begin{table}[]
%\centering
%\tiny
%\small
%\fontsize{8}{9.5}\selectfont

\newcommand{\arr}[1][3pt]{\mathrel{%
   \vcenter{\hbox{\rule[-.2pt]{#1}{.4pt}}}%
   \mkern-5mu\hbox{\usefont{U}{lasy}{m}{n}\symbol{41}}}}

\begin{table*}[htbp]
\fontsize{9pt}{9pt}\selectfont
\tabcolsep=2pt
\begin{tabular}{lll}
\\
\toprule
\textbf{Order} & \textbf{Model} & \textbf{Task Sequence} \\
\midrule
1 & T5 & db $\arr$ amazon $\arr$ yahoo $\arr$ ag\\
2 & T5 & db $\arr$ amazon $\arr$ ag $\arr$ yahoo\\
3 & T5 & yahoo $\arr$ amazon $\arr$ ag $\arr$ db \\
\midrule
4 & BERT & ag $\arr$ yp $\arr$ amazon $\arr$ yahoo $\arr$ db \\
5 & BERT & yp $\arr$ yahoo $\arr$ amazon $\arr$ db $\arr$ ag \\
6 & BERT & db $\arr$ yahoo $\arr$ ag $\arr$ amazon $\arr$ yp \\
7 & BERT & yp $\arr$ ag $\arr$ db $\arr$ amazon $\arr$ yahoo \\
\midrule
8 & T5, BERT & \makecell{mnli $\arr$ cb $\arr$ wic $\arr$ copa $\arr$ qqp $\arr$ boolq $\arr$ rte $\arr$ imdb $\arr$ \\ yp $\arr$ amazon $\arr$ sst2 $\arr$ dbpedia $\arr$ ag $\arr$ multirc $\arr$ yahoo} \\[2ex]
9 & T5, BERT & \makecell{multirc $\arr$ boolq $\arr$ wic $\arr$ mnli $\arr$ cb $\arr$ copa $\arr$ qqp $\arr$ rte $\arr$ \\ imdb $\arr$ sst2 $\arr$ dbpedia $\arr$ ag $\arr$ yp $\arr$ amazon $\arr$ yahoo} \\[2ex]
10 & T5, BERT & \makecell{yp $\arr$ amazon $\arr$ mnli $\arr$ cb $\arr$ copa $\arr$ qqp $\arr$ rte $\arr$ imdb $\arr$  \\ sst2 $\arr$ dbpedia $\arr$ ag $\arr$ yahoo $\arr$ multirc $\arr$ boolq $\arr$ wic} \\
\midrule
11 & T5 & \makecell{
wsc $\arr$ banking77-19 $\arr$ banking77-9 $\arr$ banking77-8 $\arr$ banking77-25 $\arr$ \\ 
yahoo-1 $\arr$ 
banking77-34 $\arr$ banking77-3 $\arr$ banking77-23 $\arr$ \\
cb $\arr$ banking77-7 $\arr$ banking77-35 $\arr$ banking77-13 $\arr$ imdb $\arr$ \\ 
banking77-12 $\arr$ 
banking77-17 $\arr$ multirc $\arr$ banking77-14 $\arr$ emotion-0 $\arr$ \\
banking77-22 $\arr$ yp $\arr$ dbpedia-14-5  $\arr$ banking77-30 $\arr$ \\ 
banking77-1 $\arr$ 
banking77-15 $\arr$ boolq $\arr$ banking77-20 $\arr$ banking77-21 $\arr$ \\
dbpedia-14-2 $\arr$ qnli $\arr$ banking77-31 $\arr$ banking77-29 $\arr$ emotion-2 $\arr$ yahoo-3 $\arr$ \\
dbpedia-14-1 $\arr$ banking77-32 $\arr$ banking77-0 $\arr$ rte $\arr$ 
ag-news $\arr$ dbpedia-14-4 $\arr$ \\
banking77-2 $\arr$ yahoo-4 $\arr$ banking77-11 $\arr$ banking77-37 $\arr$ banking77-27 $\arr$ \\ sst2 $\arr$ 
banking77-33 $\arr$ copa $\arr$ 
banking77-5 $\arr$ dbpedia-14-0 $\arr$ wic  $\arr$ \\ 
qqp $\arr$ 
banking77-26 $\arr$ yahoo-2 $\arr$ banking77-10 $\arr$
banking77-36 $\arr$ \\ banking77-4 $\arr$ 
emotion-1 $\arr$ 
dbpedia-14-3 $\arr$ amazon $\arr$ 
banking77-28 $\arr$ \\ banking77-16 $\arr$
banking77-24 $\arr$ 
mnli $\arr$ cola $\arr$ \\ 
wnli $\arr$ 
banking77-18  $\arr$ banking77-6 $\arr$ dbpedia-14-6 $\arr$ yahoo-0
}\\
\midrule
12 & T5 & \makecell{banking77-29 $\arr$ yp $\arr$ banking77-30 $\arr$ banking77-26 $\arr$ \\
banking77-20 $\arr$ yahoo-2 $\arr$ 
amazon $\arr$ dbpedia-14-2 $\arr$ banking77-24 $\arr$ 
yahoo-3 $\arr$ \\
banking77-22 $\arr$ banking77-16 $\arr$ 
yahoo-0 $\arr$ dbpedia-14-1 $\arr$ emotion-2 $\arr$  
dbpedia-14-4 $\arr$\\ dbpedia-14-6 $\arr$ 
wic $\arr$ banking77-23 $\arr$ banking77-14 $\arr$ 
banking77-18 $\arr$ yahoo-4  $\arr$\\
banking77-5 $\arr$ banking77-0 $\arr$ 
banking77-13 $\arr$ 
cb $\arr$ banking77-35 $\arr$ rte $\arr$  \\
banking77-4 $\arr$ dbpedia-14-3 $\arr$ banking77-1 $\arr$ banking77-9 $\arr$\\ 
banking77-15 $\arr$ banking77-3 $\arr$ 
banking77-6 $\arr$ banking77-21 $\arr$ 
mnli $\arr$ banking77-2 $\arr$  \\
yahoo-1 $\arr$ boolq $\arr$
banking77-10 $\arr$ banking77-25 $\arr$ 
banking77-37 $\arr$ banking77-17 $\arr$  \\
qqp $\arr$ banking77-28 $\arr$
wnli $\arr$ banking77-8 $\arr$
banking77-31 $\arr$  \\
dbpedia-14-0 $\arr$ banking77-11  $\arr$ banking77-27 $\arr$
banking77-7 $\arr$ multirc $\arr$\\ 
banking77-33 $\arr$
banking77-12 $\arr$ imdb $\arr$ copa $\arr$ \\
banking77-19 $\arr$ cola $\arr$ 
banking77-34 $\arr$ sst2 $\arr$ emotion-0 $\arr$\\
wsc $\arr$ qnli $\arr$ emotion-1 $\arr$ 
banking77-32  $\arr$ dbpedia-14-5 $\arr$ ag-news $\arr$ banking77-36
}\\
\midrule
13 & T5 & \makecell{yahoo-2 $\arr$ copa $\arr$ banking77-22 $\arr$ emotion-0 $\arr$ banking77-1 $\arr$ emotion-1 $\arr$ \\
yahoo-0 $\arr$ banking77-32 $\arr$ banking77-37 $\arr$ dbpedia-14-0 $\arr$ banking77-3  $\arr$ qnli $\arr$ \\
multirc $\arr$ banking77-0 $\arr$ dbpedia-14-3 $\arr$ ag-news $\arr$ banking77-10 $\arr$ imdb $\arr$ \\ 
banking77-5 $\arr$ banking77-15 $\arr$ banking77-16 $\arr$ wnli $\arr$ \\
banking77-36 $\arr$ wsc $\arr$ banking77-13  $\arr$ banking77-19 $\arr$ amazon $\arr$ \\
banking77-29 $\arr$ banking77-33 $\arr$ boolq $\arr$ banking77-28 $\arr$ \\
yahoo-1 $\arr$ yp $\arr$ banking77-14 $\arr$ emotion-2 $\arr$ mnli $\arr$ banking77-7 $\arr$ \\
banking77-21 $\arr$ banking77-30 $\arr$ banking77-4 $\arr$ banking77-9 $\arr$ \\
banking77-35 $\arr$ dbpedia-14-5 $\arr$ 
banking77-26 $\arr$ \\
cola $\arr$ qqp $\arr$ yahoo-3 $\arr$ dbpedia-14-6 $\arr$ wic $\arr$ \\
banking77-25 $\arr$ banking77-31 $\arr$ 
banking77-17 $\arr$ \\
banking77-23 $\arr$ banking77-8 $\arr$ cb $\arr$ 
banking77-6 $\arr$ dbpedia-14-2 $\arr$ \\
banking77-20 $\arr$
dbpedia-14-1 $\arr$ yahoo-4 $\arr$ banking77-18 $\arr$ \\
banking77-2 $\arr$ banking77-34 $\arr$ 
banking77-12 $\arr$ dbpedia-14-4 $\arr$ banking77-27 $\arr$ \\
rte $\arr$ sst2 $\arr$ banking77-24 $\arr$ 
banking77-11 
}\\
\bottomrule
\end{tabular}
\caption{Thirteen different orders of task sequences used for continual learning experiments. Orders 1-7 correspond to the standard CL benchmarks adopted by prior works \citep{razdaibiedina2023progressive} for short-sequence experiments. Orders 8-10 are long-sequence orders spanning 15 tasks. Orders 11-13 are our customized extremely long sequences, where the tasks are \textbf{randomly permuted}. In these extremely long cases, existing techniques such as the SOTA, ProgPrompt \citep{razdaibiedina2023progressive}, cannot cope with these long tasks, due to the quadratic growing training and inference costs.}
% \fontsize{2cm}{2.5cm}\selectfont
\label{tab:seq}
\end{table*}

%\caption{Task sequences in continual learning benchmark.}
%\end{table}

\begin{table*}[htbp]
\centering
\fontsize{9}{6}\selectfont
\scalebox{1}{
\begin{tabular}{ l|ccc|ccc }
 \toprule
 \textbf{Hyperparameter} $\downarrow$  & \multicolumn{3}{c|}{\textbf{Short-sequence benchmark}} & \multicolumn{3}{c}{\textbf{Long-sequence benchmark}} \\ 
 Num. samples $\rightarrow$ & 16& 200 & 1000 & 20 & 200 & 1000 \\
  \toprule
 \multicolumn{7}{c}{T5-large Model} \\
 \toprule
 Epochs & 300 & 150 & 20 & 300 & 150 & 20 \\ 
 Learning rate & 0.3 & 0.3 & 0.3 & 0.3 & 0.3 & 0.3 \\
 Length of shared prompt $\theta_{\Pc^{\ast}}$ & 10 & 10 & 10 & 10 & 10 & 10 \\
 Length of each prompt in $\Qc$& 10 & 10 & 10 & 10 & 10 & 10 \\
 Memory factor $\eta$ & 0.001 & 0.001 & 0.001 & 0.01 & 0.01 & 0.01 \\
 %Optimizer & cell8 & cell9 \\ 
 %Batch size & cell8 & cell9 \\
 \toprule
 \multicolumn{7}{c}{BERT-base Model} \\
 \toprule
 Epochs & 300 & 150 & 40 & 300 & 150 & 40 \\ 
 Learning rate & 0.0001& 0.0001 & 0.0001 & 0.0001 & 0.0001 & 0.0001\\
 Length of shared prompt $\theta_{\Pc^{\ast}}$ & 10 & 10 & 10 & 5 & 5 & 5\\
 Length of each prompt in $\Qc$ & 10 & 10 & 10 & 5 & 5 & 5 \\
 Memory factor $\eta$ & 0.001 & 0.001 & 0.001 & 0.01 & 0.01 & 0.01 \\
 \bottomrule
 %\caption{Hyperparameters for BERT-based experiments on a standard CL benchmark and long-sequence experiments.}
\end{tabular}
% }
}
\captionof{table}{Hyperparameters used for Q-tuning across different CL experiments.}
\label{tab:table_hpo}
\end{table*}

\begin{table*}[htbp]
\centering
\setlength{\tabcolsep}{4pt}
\fontsize{9}{6}\selectfont
\small
\scalebox{1}{
\begin{tabular}{c|ccc|ccc|ccc|ccc|ccc}
% \toprule
\toprule
 \multirow{3}{*}{\textbf{Sequence}}&\multicolumn{3}{c|}{\multirow{1}{*}{\textbf{Method}}}&\multicolumn{9}{c}{\textbf{T5-large Results}}\\
  & Q-prompt & Aggregation & $\theta_{\Pc^{\ast}}$&  \multicolumn{3}{c}{\textbf{Order1}}&\multicolumn{3}{c}{\textbf{Order2}}&\multicolumn{3}{c}{\textbf{Order3}}&\multicolumn{3}{c}{\textbf{Average}}\\
&\multicolumn{3}{c|}{(\textbf{Num. samples} $\rightarrow$)} & 16 & 200 & 1000 & 16 & 200 & 1000 & 16 & 200 & 1000& 16 & 200 & 1000 \\
\midrule
\multirow{3}{*}{Short}&\cmark & & & 74.1 & 80.0 & 79.6 & 74.2& 79.5 &79.9 & 75.3& 79.8& 80.1 & 74.5 &79.8&79.8\\ 
&\cmark& \cmark&  & 74.9 & 80.9& 80.4 & 75.1& 80.6& 80.1& 75.6& 81.1& 80.8 & 75.2 & 80.9 & 80.4\\
&\cmark& & \cmark & 75.0 & 80.7& 81.6 & 74.6& 80.7& 80.7& 75.7& 80.4& 80.6 & 75.1 & 80.6 &80.9\\
&\cmark& \cmark& \cmark&   75.8 & 81.2& 82.3 & 75.8& 81.1& 82.2 & 76.9& 81.1& 81.1 & \textbf{76.2}&\textbf{81.2}&\textbf{81.9}\\
\midrule
 \multirow{3}{*}{\textbf{Sequence}}&\multicolumn{3}{c|}{\multirow{1}{*}{\textbf{Method}}}&\multicolumn{9}{c}{\textbf{T5-large Results}}\\
  & Q-prompt & Aggregation & $\theta_{\Pc^{\ast}}$&  \multicolumn{3}{c}{\textbf{Order8}}&\multicolumn{3}{c}{\textbf{Order9}}&\multicolumn{3}{c}{\textbf{Order10}}&\multicolumn{3}{c}{\textbf{Average}}\\
&\multicolumn{3}{c|}{(\textbf{Num. samples} $\rightarrow$)} & 20 & 200 & 1000 & 20 & 200 & 1000 & 20 & 200 & 1000 &20 & 200 & 1000\\
\midrule
\multirow{3}{*}{Long}&\cmark & & & 76.3& 81.6& 81.0 & 76.9& 80.6& 80.5& 76.7& 80.1& 80.9 & 76.7 & 80.8 & 80.8  \\ 
&\cmark& \cmark&  &  77.1 & 81.6& 82.1 & 77.4& 81.7& 81.9 & 77.2& 80.2 & 82.4 & 77.2 & 81.1 & 82.1\\
&\cmark& & \cmark & 77.4 & 81.7& 82.5 & 77.9& 80.9& 82.5 & 77.1& 80.7& 82.0 &77.4  & 81.1 & 82.3\\
&\cmark& \cmark& \cmark&   78.3 & 82.4 & 83.5 & 79.7& 82.1 & 83.3& 78.7& 81.4& 83.1 & \textbf{78.9} & \textbf{81.9} & \textbf{83.3}\\
\bottomrule
\end{tabular}
}
\caption{ More details of the ablation study results on each order reported in Table~\ref{tab:abaltionstud_average}. For the long-sequence experiments, we set the queue size to 10. All results are averaged over 3 runs.}
\label{tab:more_details_ablation}
\end{table*}

\subsection{Implementation and Experiment Details}
\label{appendix:experiment}

\paragraph{More Details of the Methods for Comparison}
Following \cite{razdaibiedina2023progressive}, we consider 11 baseline methods for comparison with the proposed Q-tuning: 
\begin{itemize}[leftmargin=*]
    \item \textbf{Per-task Finetune} separately tunes the whole model for each task. 
    We use this type of method as a baseline in the short-sequence benchmark experiments. 
    \item \textbf{Continual Finetune}  \cite{wang2020efficient, huang2021continual}
    continually tunes the whole model on a sequence of tasks without adding any regularization or replaying data from the previous tasks. 
    \item \textbf{Prompt Tuning}  \citep{qin2021lfpt5,lester2021power} sequentially trains a shared soft prompt across all tasks, while freezing the pretrained model. 
    \item \textbf{Data Replay} finetunes the whole model for new tasks while replaying samples from previous tasks to prevent the CF problem. 
    \item \textbf{EWC} \citep{kirkpatrick2017overcoming} finetunes the  whole model using a regularization loss which penalizes updating parameters that could disturb the previously learned tasks. 
    \item \textbf{A-GEM}  \citep{chaudhry2018efficient} retrieves examples from old tasks and restricts the gradients to update the model when learning new tasks. 
    \item \textbf{LFPT5} \citep{qin2021lfpt5}  continuously trains a soft prompt that learns the tasks while generating samples for experience replay. 
    \item \textbf{MBPA++} \citep{de2019episodic} uses an episodic memory to augment BERT by storing all seen examples. 
    \item \textbf{IDBR} \citep{huang2021continual} 
    continuously trains the whole model by using data replay and a regularization loss. It adopts sentence representation disentanglement in task-specific and task-generic spaces, achieving SOTA on the CL benchmark with BERT. 
    \item \textbf{Per-task Prompt}  \citep{lester2021power} trains a separate soft prompt for each task while keeping the original model frozen. This type of method naturally eliminates the CF problem, because separately tuned prompts will not change when new tasks are learned. However, this independent prompt tuning setup cannot achieve forward knowledge transfer.
    \item \textbf{ProgPrompt} \citep{razdaibiedina2023progressive} trains a progressively increased prompt list to achieve the forward knowledge transfer and resist the CF problem using prompt tuning without relying on data replay. 
    Current SOTA on continual prompt tuning benchmarks with T5 and BERT. 
\end{itemize}

\paragraph{Implementation Details} We use PyTorch and HuggingFace Transformers library  for our implementation. For the standard CL benchmark, we use official datasets provided by \citet{zhang2015character}, following \citet{de2019episodic, zhang2015character}. We use HuggingFace datasets (\url{https://github.com/huggingface/datasets}) to download data for GLUE tasks \citep{wang2018glue}, SuperGLUE tasks \citep{wang2019superglue} tasks, IMDB movie reviews dataset \citep{maas2011learning}, Banking77 dataset \citep{casanueva2020efficient}, and Emotion dataset \citep{saravia2018carer}, which we use for long-sequence CL experiments, life-long learning experiments and ablation studies. Following previous studies \citep{ de2019episodic,razdaibiedina2023progressive}, for CL experiments, for each dataset, we use the available validation set as a test set (since test data is not available), and hold out 500 samples from the train set to construct the validation set. For our ablation studies, we report the maximal validation set performance.

We use the Adam optimizer and set the batch size to 8 for all the experiments. 
Following \cite{razdaibiedina2023progressive}, 
we train each prompt between 20 and 300 epochs, depending on the number of data points. We use the prompt checkpoints with the best validation set score as our final prompts. Prompts are initialized from randomly sampled tokens as in \citet{lester2021power}, hyperparameters are shown in the Table ~\ref{tab:table_hpo}.

The mutual information maximization can be approximated by maximizing its variational lower bound \citep{barber2004algorithm,poole2019variational} defined by Eq.~(\ref{mi_jsd}). But this variational approximation
requires extra costly computation to optimize the discriminator $\mathcal{F}_w$. We empirically find a KL-divergence based loss can go for the same goal, which is also verified by  \cite{muller2019does,tian2019contrastive}. 
The KL-divergence based MR loss between the new memory and the old memory is defined as follows:
% {\fontsize{9}{0}\selectfont
\begin{align}
   \Lc_{\text{MR}} = \sum_{i\in|\Tc|}\sum_{(\x^i,\y^i)\in \Tc^i}D_{\rm KL}(p(\y^i|\x^i;\theta_{\Mcc}, \theta^i_{\Pc^{\ast}})&\, \nonumber\\
   \|\, p(\y^i|\x^i;\theta_{\Mcc}, \Wc^{i-1} \circ [\theta^{i-1}_{\Pc^{\ast}},\Qc^{i-1}]))&,
   \label{eq_loss_mr}
\end{align}
% }
where only the shared prefix prompt $\theta^i_{\Pc^{\ast}}$ is trainable. This MR regularization loss does not require training an extra discriminator network, achieving the same objective as knowledge distillation \citep{hinton2015distilling}.

For all the CL experiments, we use early stopping as in \citet{huang2021continual}, to save model checkpoint based on the best validation performance on the current task. We report test set performance after training on all tasks as our final metric. For SuperGLUE experiments, we report maximal validation set performance over the course of training as in \citet{lester2021power}. We measure the validation performance after every epoch and use metrics described in Appendix~\ref{appendix:datasets}. We use the same hyperparameter settings for all prompt-based approaches (Q-tuning, Progressive Prompts, per-task prompt) as in \cite{razdaibiedina2023progressive}.

% For all other approaches, we use hyperparameters provided in their corresponding papers.

\paragraph{MLP-based prompt} We follow \citet{razdaibiedina2023progressive} by setting a two-layer MLP for parameterizing the soft-prompt. The two-layer MLP includes two linear layers with the ReLU activation function. The number of hidden nodes in the hidden layer is set to 512 in all Q-tuning experiments.

\section{More Ablation Study Results}
\label{appendix:more_results}

Table~\ref{tab:more_details_ablation} reports more details of the results on each order in Table~\ref{tab:abaltionstud_average} for the ablation study. Table~\ref{tab:eta_ablation} presents the effectiveness of setting different memory factors $\eta$ in the MR loss. As shown, the $\eta$ is suggested to $10^{-2}$ for the long sequence tasks. By comparing with the results of ``w/o MR'', the performance by using MR loss is improved by 1.7\% on average.

\begin{table}[H]
\centering
\setlength{\tabcolsep}{2pt}
\fontsize{9}{12}\selectfont
\small
\caption{Ablation study experiments (20 samples/class for long sequence) on the memory factor $\eta$ of the MR loss. All results are averaged over 3 runs.}
\scalebox{1}{
\begin{tabular}{c|cccc}
% \toprule
\toprule
 \multirow{2}{*}{\textbf{Parameter}}&  \multicolumn{4}{c}{\multirow{1}{*}{\textbf{Long Sequence}}}\\
 & \textbf{Order 8} & \textbf{Order 9}  & \textbf{Order 10} & \textbf{Average} \\
\midrule
$\eta=1$ & 73.5& 75.8& 73.2& 74.2\\
$\eta=10^{-1}$ & 77.1& 78.6& 77.3& 77.7\\
$\eta=10^{-2}$ & \textbf{78.3} & \textbf{79.7}& \textbf{78.7}& \textbf{78.9}\\
$\eta=10^{-3}$& 78.1& 79.4& 78.0& 78.5\\
$\eta=10^{-4}$& 77.8& 78.8& 77.8& 78.1\\
w/o MR & 77.3& 77.3& 77.1& 77.2\\
\bottomrule
\end{tabular}
}
\label{tab:eta_ablation}
\end{table}

\section{Evaluation of Forward Transfer and Backward Transfer}
We compare the forward transfer and backward transfer performance of Q-tuning with the competitors using the metrics defined by \citep{lopez2017gradient} in the long-sequence experiments. 
Table~\ref{tab:fkt1} and Table~\ref{tab:bkt1} report the forward knowledge transfer performance and backward knowledge transfer performance across 3 task orders (Order 8, Order 9, Order 10) of long-sequence experiments. 
Figures~\ref{fig:fwt_8_20}, \ref{fig:fwt_8_200} and \ref{fig:fwt_8_1000} show the forward transfer scores of the order 8 task sequence. Figures~\ref{fig:fwt_9_20}, \ref{fig:fwt_9_200} and \ref{fig:fwt_9_1000} show the forward transfer scores of the order 9 task sequence, and Figures~\ref{fig:fwt_10_20}, \ref{fig:fwt_10_200} and \ref{fig:fwt_10_1000} show the forward transfer scores of the order 10 task sequence.

\begin{table}[H]
\fontsize{9}{6}\selectfont
\begin{subtable}[c]{.48\textwidth}
  \centering
  \scalebox{0.87}{
  \begin{tabular}{ccc} % <-- Alignments: 1st column left, 2nd middle and 3rd right, with vertical lines in between
    \toprule
      \textbf{Method}  &  Few-shot (20 samples/class) & Full-shot\\
      \midrule
      Finetune & 16.9	& 16.0 \\
      Data Replay  & 16.8 &	14.0 \\
      Prompt Tuning &  23.1& 24.9 \\
      Per-task Prompt &  0 &  0  \\
      LFPT5  & 18.9 &24.9 \\
      ProgPrompt$^{\ast}$ & 21.1& 24.7  \\
      \midrule 
      Q-tuning (Ours) &  \textbf{26.7}&	\textbf{29.7} \\
     \bottomrule
    \end{tabular}
    }
    \subcaption{The average forward knowledge transfer performance across 3 task orders (Order 8, Order 9, Order 10) of long-sequence experiments. }
    \label{tab:fkt1}
\end{subtable}
% \begin{minipage}{.48\textwidth}
\begin{subtable}[c]{0.48\textwidth}
   \centering
  \scalebox{0.87}{
  \begin{tabular}{ccc} % <-- Alignments: 1st column left, 2nd middle and 3rd right, with vertical lines in between
    \toprule
      \textbf{Method}  &  Few-shot (20 samples/class) & Full-shot\\
      \midrule
      Finetune & -59.5 & -63.5 \\
      Data Replay  & -24.7 & -18.0 \\
      Prompt Tuning & -47.9 & -71.0 \\
      Per-task Prompt &  0 &  0  \\
      LFPT5  & -13.5 & -8.8 \\
      ProgPrompt$^{\ast}$ & 0 & 0 \\
      \midrule 
      Q-tuning (Ours) &  \textbf{0} &	\textbf{0} \\
     \bottomrule
    \end{tabular}
    }
    \subcaption{The average backward knowledge transfer performance across 3 task orders (Order 8, Order 9, Order 10) of long-sequence experiments. }
    \label{tab:bkt1}
\end{subtable}
% \end{minipage}
\caption{Average performance of the forward and backward knowledge transfer with the T5 model on the long-sequence benchmark.}
% \vspace{-0.5cm}
\end{table}

Figures~\ref{fig:bwt_8_20}, \ref{fig:bwt_8_200} and \ref{fig:bwt_8_1000} show the backward transfer scores of the order 8 task sequence, Figures~\ref{fig:bwt_9_20}, \ref{fig:bwt_9_200} and \ref{fig:bwt_9_1000} show the backward transfer scores of the order 9 task sequence, and Figures~\ref{fig:bwt_10_20}, \ref{fig:bwt_10_200} and \ref{fig:bwt_10_1000} show the backward transfer scores of the order 10 task sequence. In these backward transfer measurements, the score 0 stands for not forgetting old tasks. The evolution of the average
accuracy over learning new tasks \citep{lopez2017gradient} are reported in Figure~\ref{fig:evolution_avg}.

\begin{figure*}[htbp]
\centering
\setlength{\tabcolsep}{2pt}
\fontsize{9}{12}\selectfont
\small
\includegraphics[width=1\textwidth]{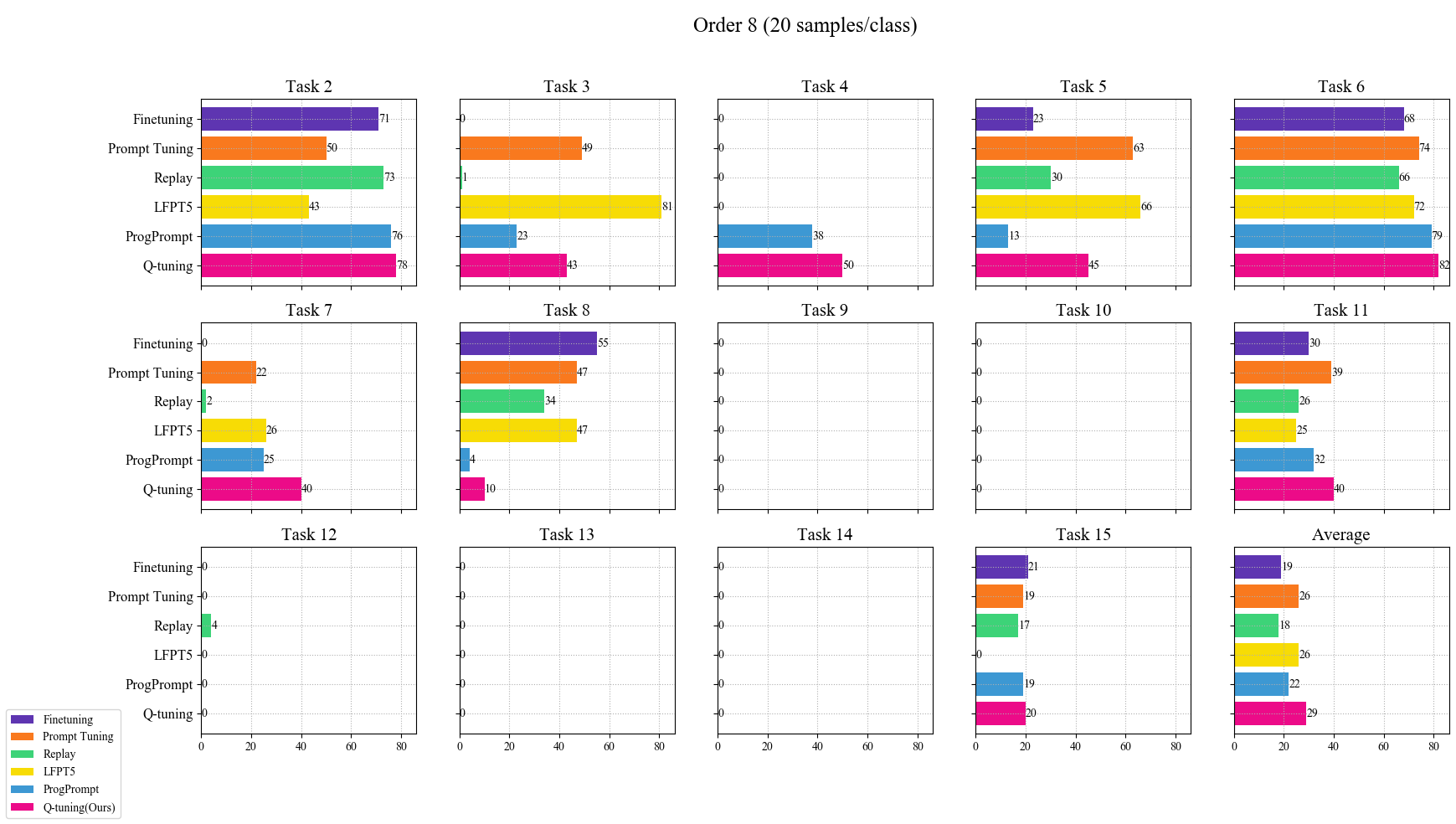}
\caption{Forward transfer score of different approaches on the order 8 (20 samples/class).}
\label{fig:fwt_8_20}
\end{figure*}

\begin{figure*}[htbp]
\centering
\setlength{\tabcolsep}{2pt}
\fontsize{9}{12}\selectfont
\small
\includegraphics[width=1\textwidth]{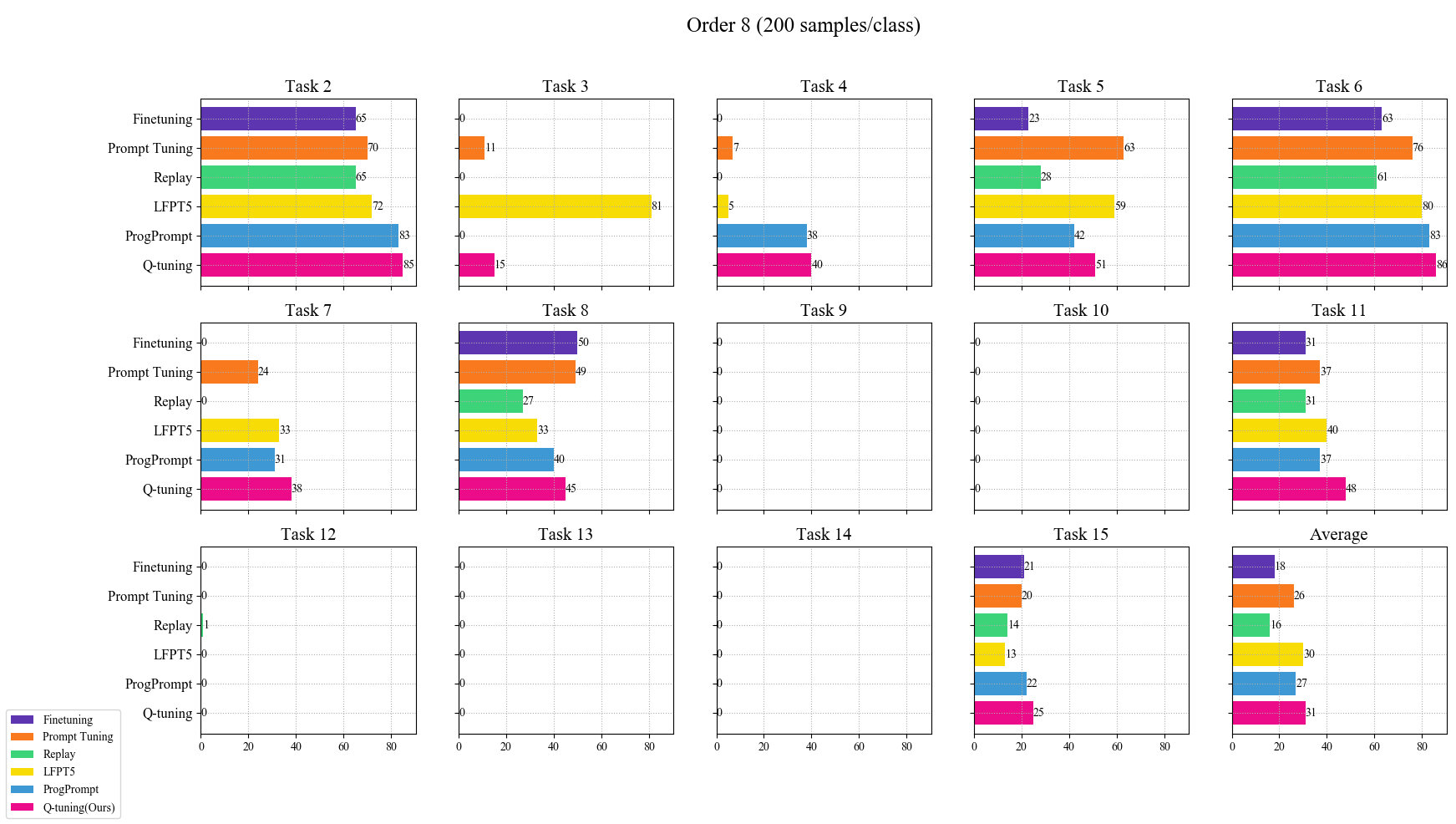}
\caption{Forward transfer score of different approaches on the order 8 (200 samples/class).}
\label{fig:fwt_8_200}
\end{figure*}

\begin{figure*}[htbp]
\centering
\setlength{\tabcolsep}{2pt}
\fontsize{9}{12}\selectfont
\small
\includegraphics[width=1\textwidth]{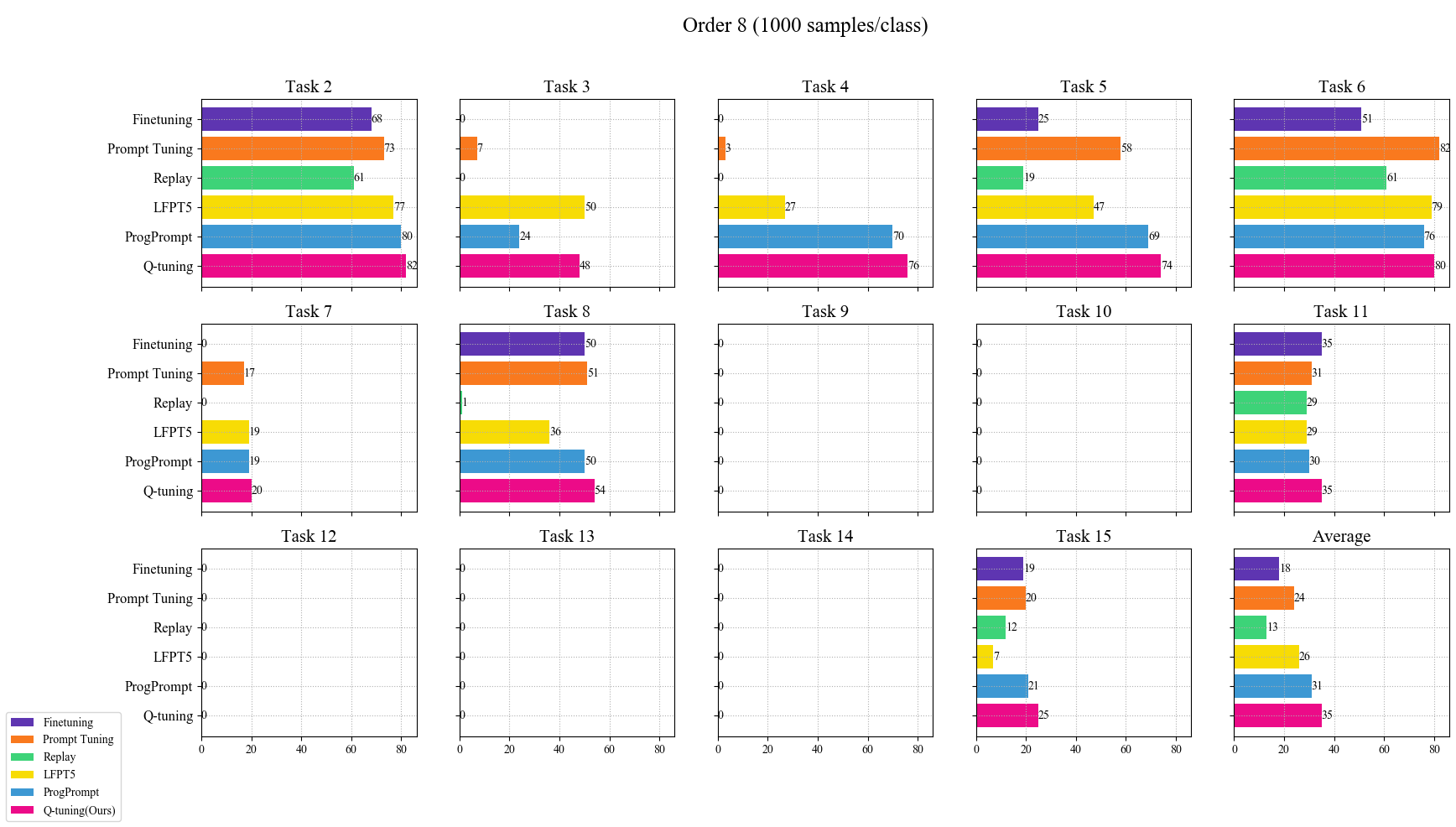}
\caption{Forward transfer score of different approaches on the order 8 (1000 samples/class).}
\label{fig:fwt_8_1000}
\end{figure*}

\begin{figure*}[htbp]
\centering
\setlength{\tabcolsep}{2pt}
\fontsize{9}{12}\selectfont
\small
\includegraphics[width=1\textwidth]{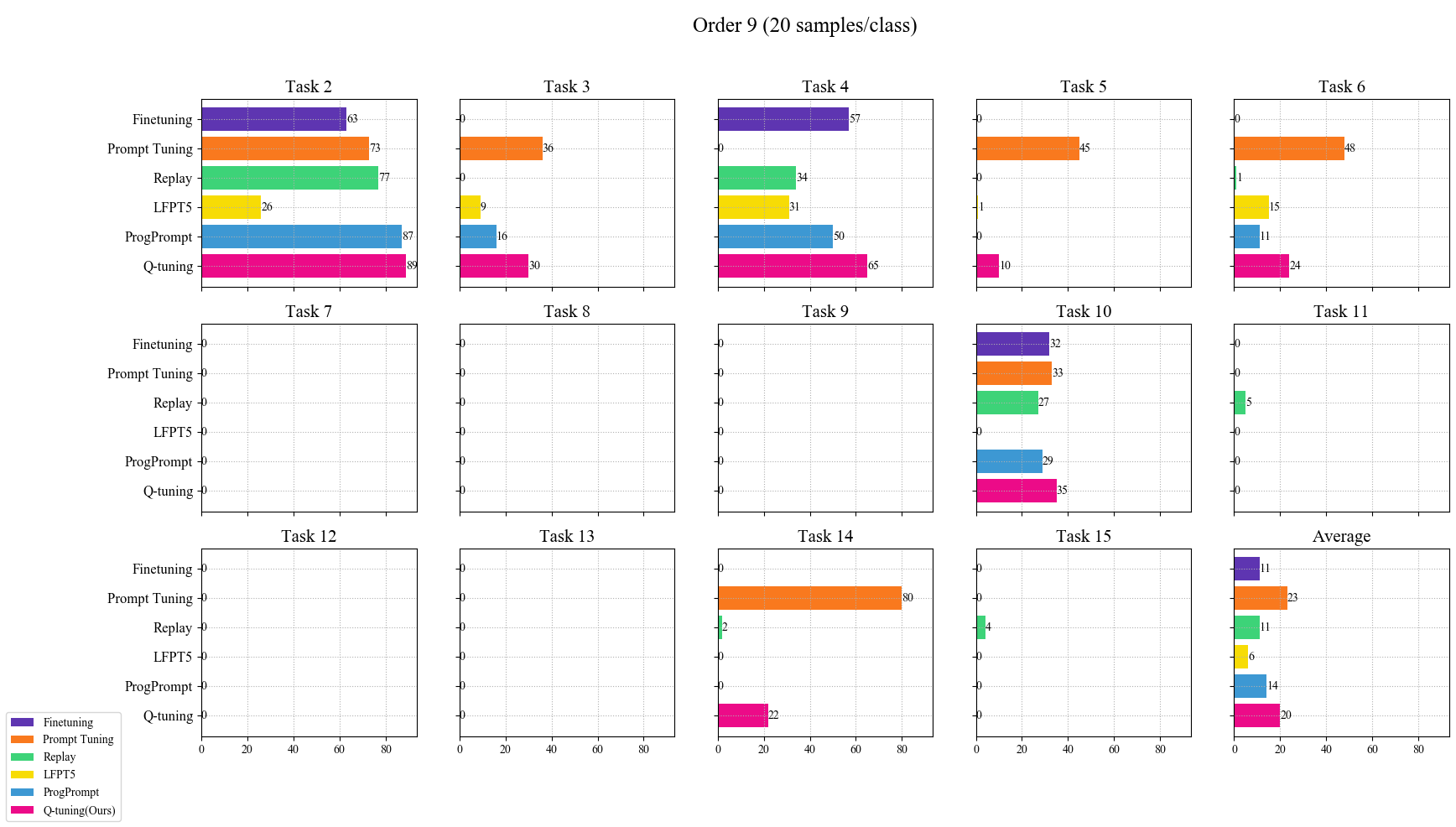}
\caption{Forward transfer score of different approaches on the order 9 (20 samples/class).}
\label{fig:fwt_9_20}
\end{figure*}

\begin{figure*}[htbp]
\centering
\setlength{\tabcolsep}{2pt}
\fontsize{9}{12}\selectfont
\small
\includegraphics[width=1\textwidth]{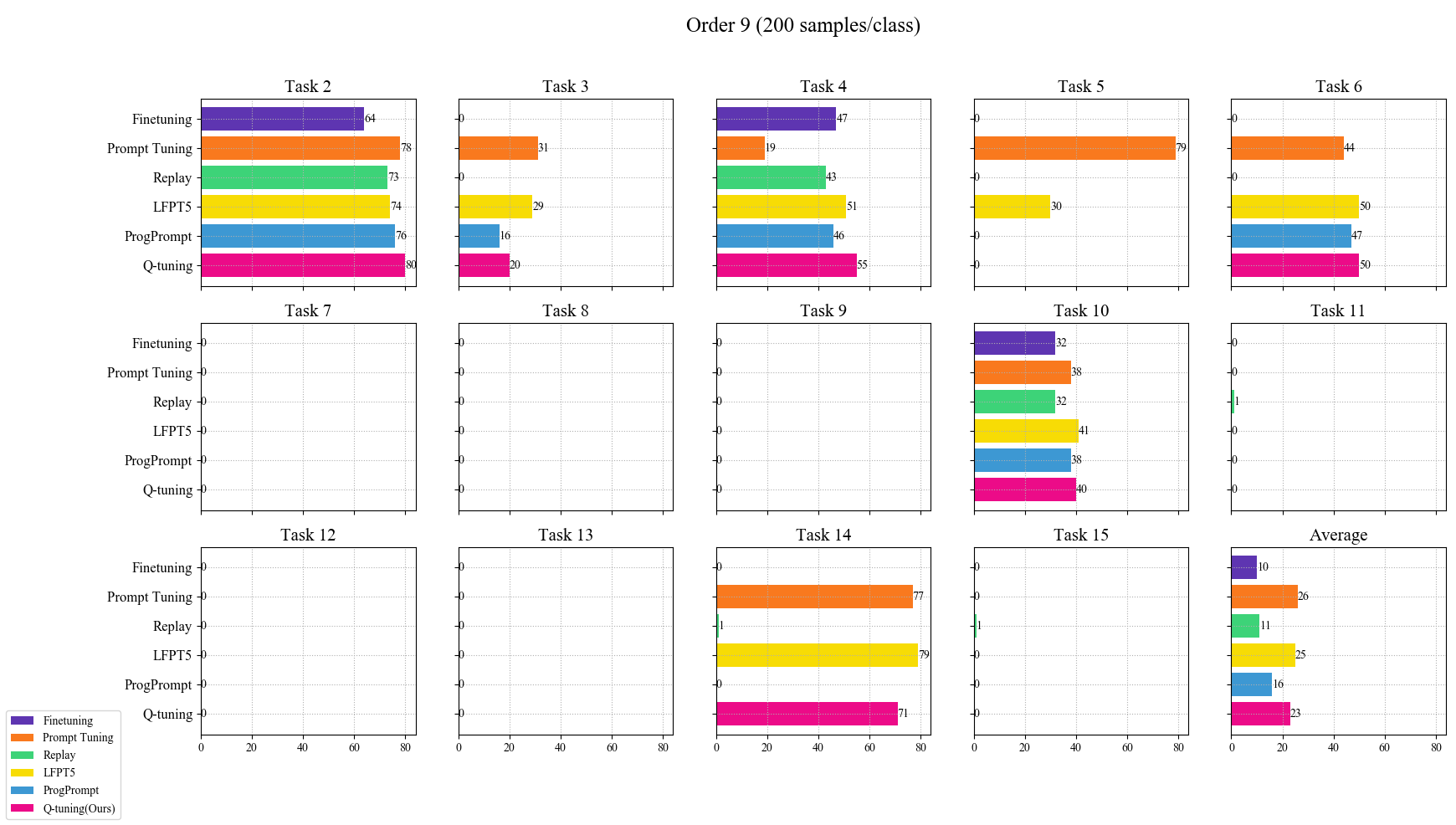}
\caption{Forward transfer score of different approaches on the order 9 (200 samples/class).}
\label{fig:fwt_9_200}
\end{figure*}

\begin{figure*}[htbp]
\centering
\setlength{\tabcolsep}{2pt}
\fontsize{9}{12}\selectfont
\small
\includegraphics[width=1\textwidth]{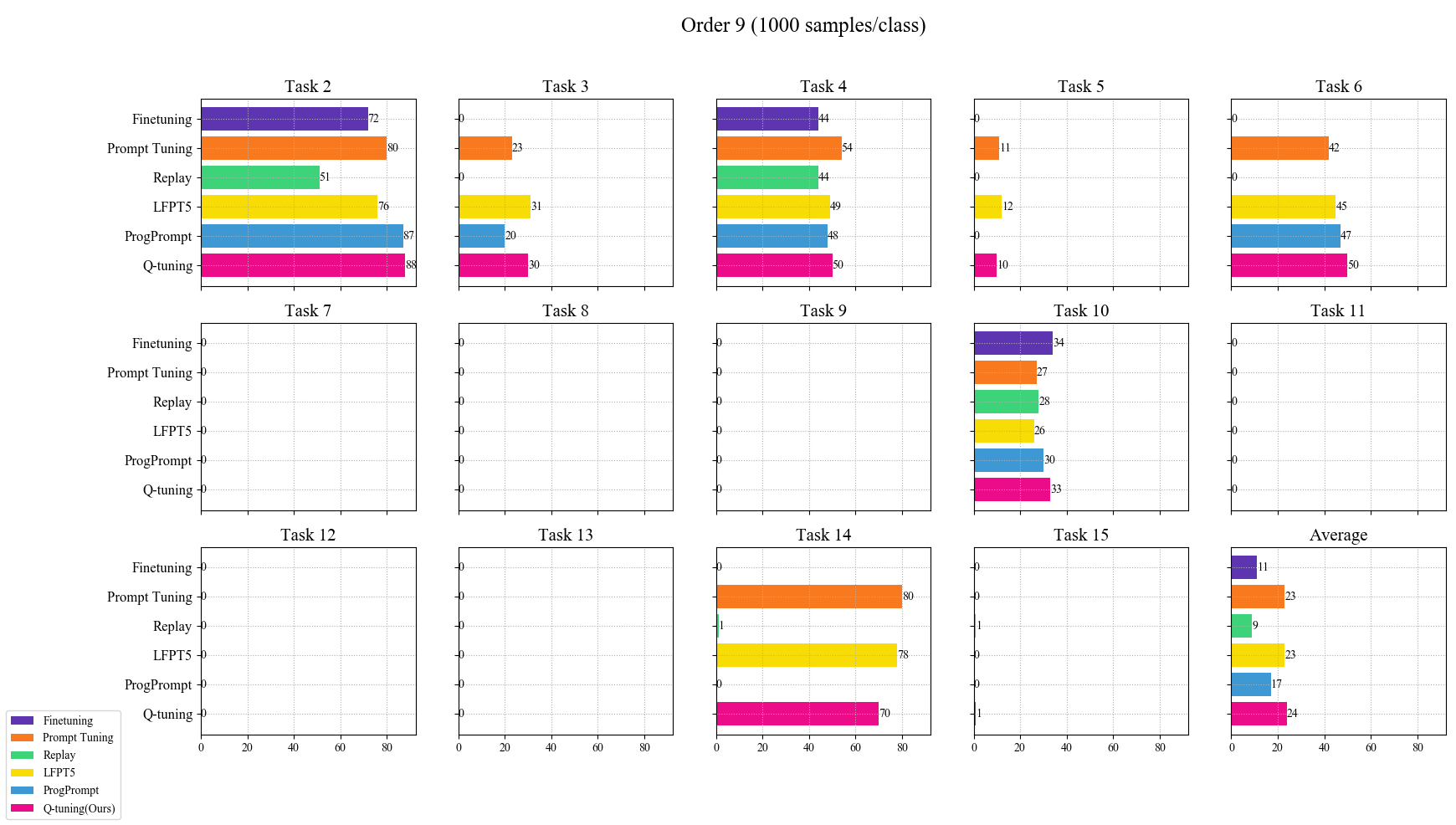}
\caption{Forward transfer score of different approaches on the order 9 (1000 samples/class).}
\label{fig:fwt_9_1000}
\end{figure*}

\begin{figure*}[htbp]
\centering
\setlength{\tabcolsep}{2pt}
\fontsize{9}{12}\selectfont
\small
\includegraphics[width=1\textwidth]{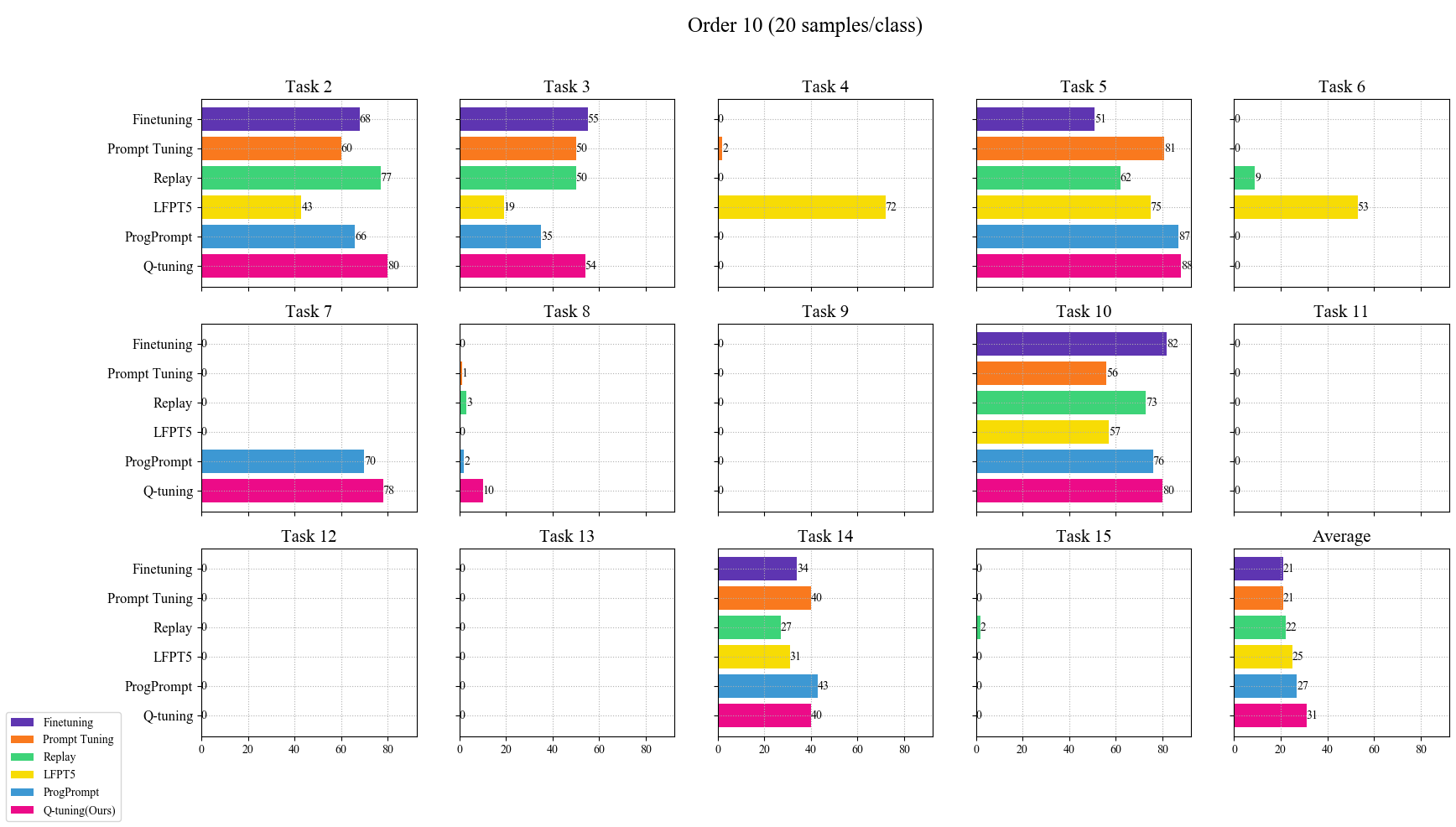}
\caption{Forward transfer score of different approaches on the order 10 (20 samples/class).}
\label{fig:fwt_10_20}
\end{figure*}

\begin{figure*}[htbp]
\centering
\setlength{\tabcolsep}{2pt}
\fontsize{9}{12}\selectfont
\small
\includegraphics[width=1\textwidth]{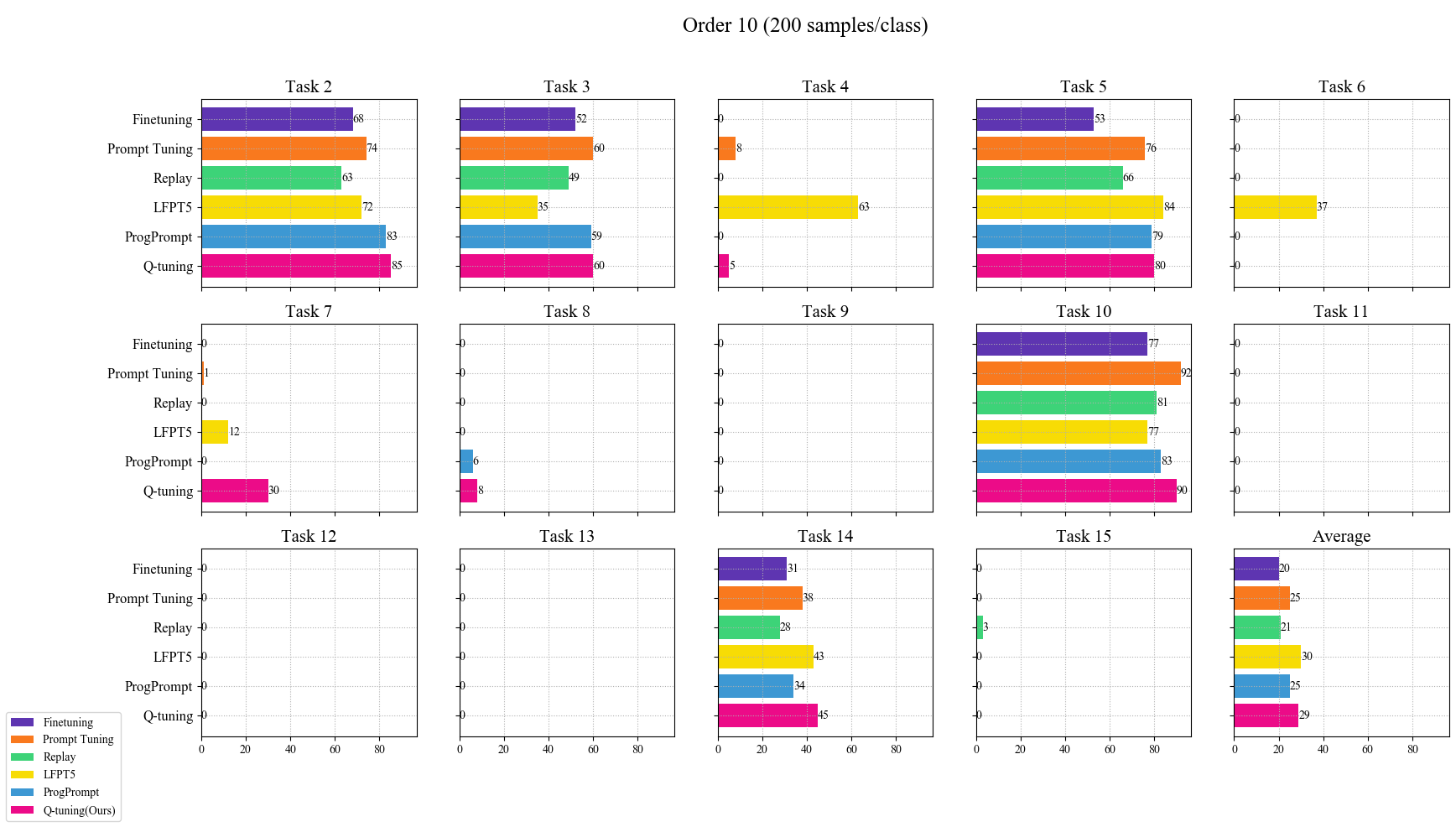}
\caption{Forward transfer score of different approaches on the order 10 (200 samples/class).}
\label{fig:fwt_10_200}
\end{figure*}

\begin{figure*}[htbp]
\centering
\setlength{\tabcolsep}{2pt}
\fontsize{9}{12}\selectfont
\small
\includegraphics[width=1\textwidth]{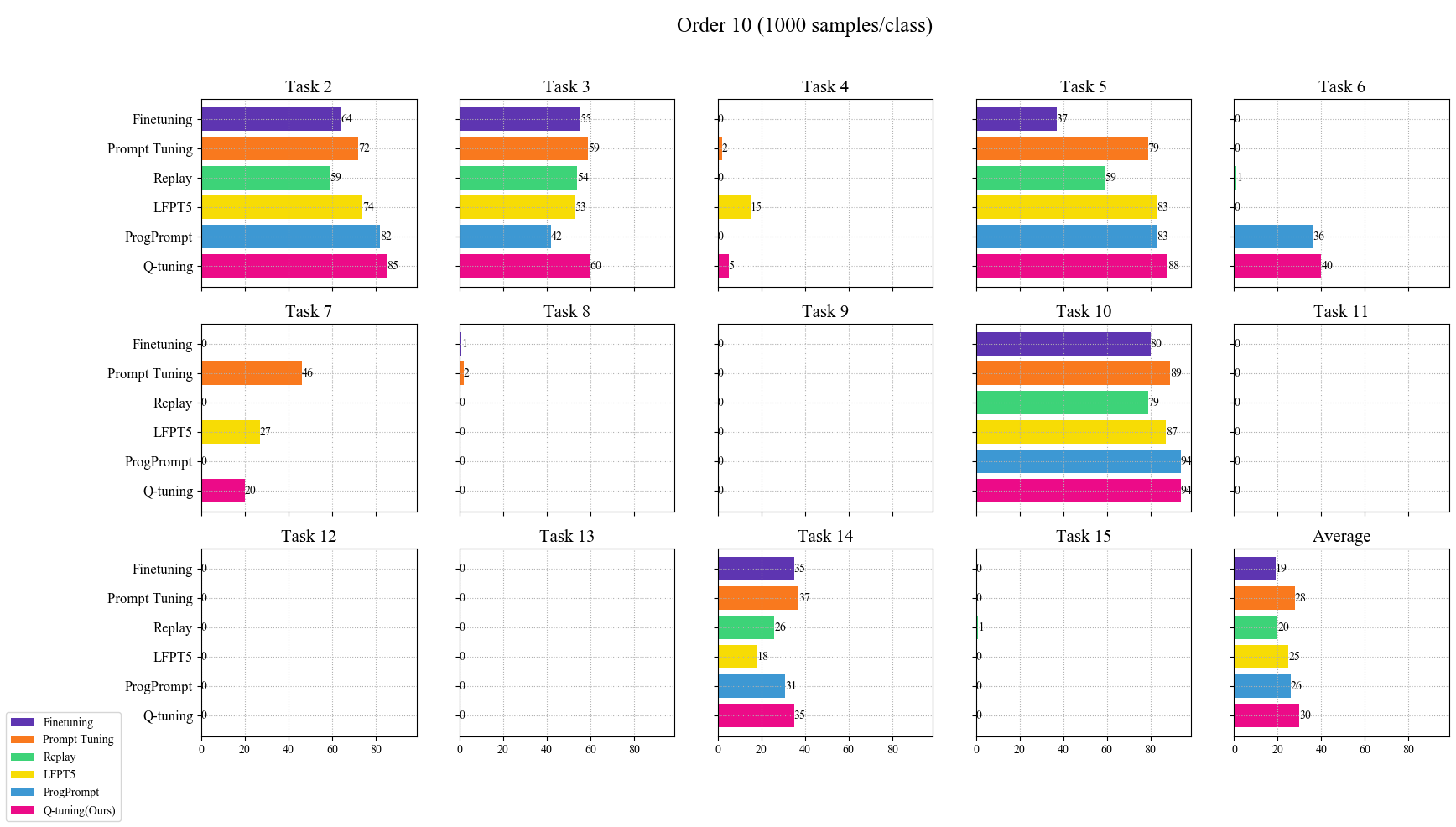}
\caption{Forward transfer score of different approaches on the order 10 (1000 samples/class).}
\label{fig:fwt_10_1000}
\end{figure*}

%%%%%%%%%%%%%%%%%%%%%%%%%%
%%%%%%%%%%%%%%%%%%%%%%%%%%
\begin{figure*}[htbp]
\centering
\setlength{\tabcolsep}{2pt}
\fontsize{9}{12}\selectfont
\small
\includegraphics[width=1\textwidth]{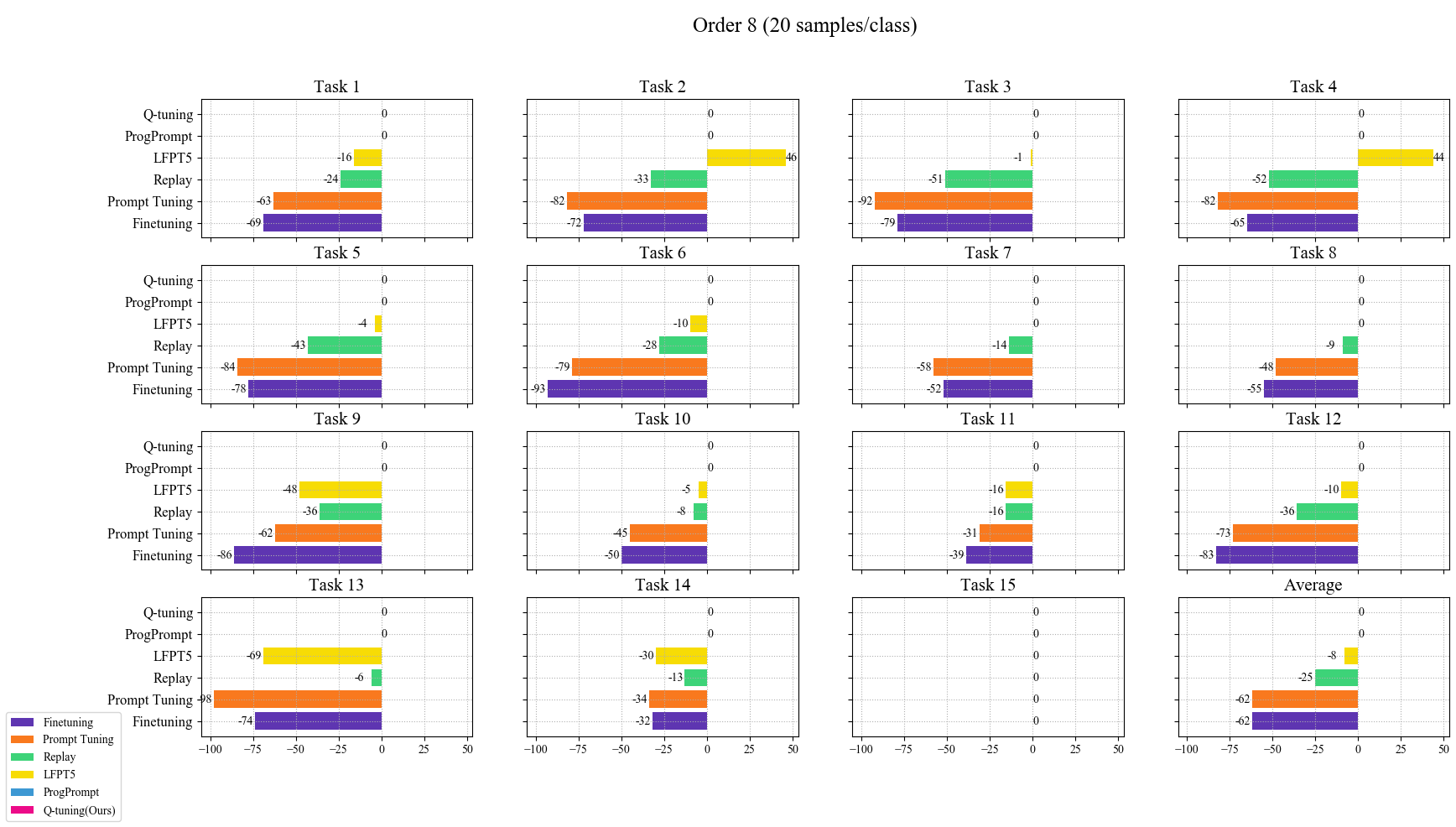}
\caption{Backward transfer score of different approaches on the order 8 (20 samples/class).}
\label{fig:bwt_8_20}
\end{figure*}

\begin{figure*}[htbp]
\centering
\setlength{\tabcolsep}{2pt}
\fontsize{9}{12}\selectfont
\small
\includegraphics[width=1\textwidth]{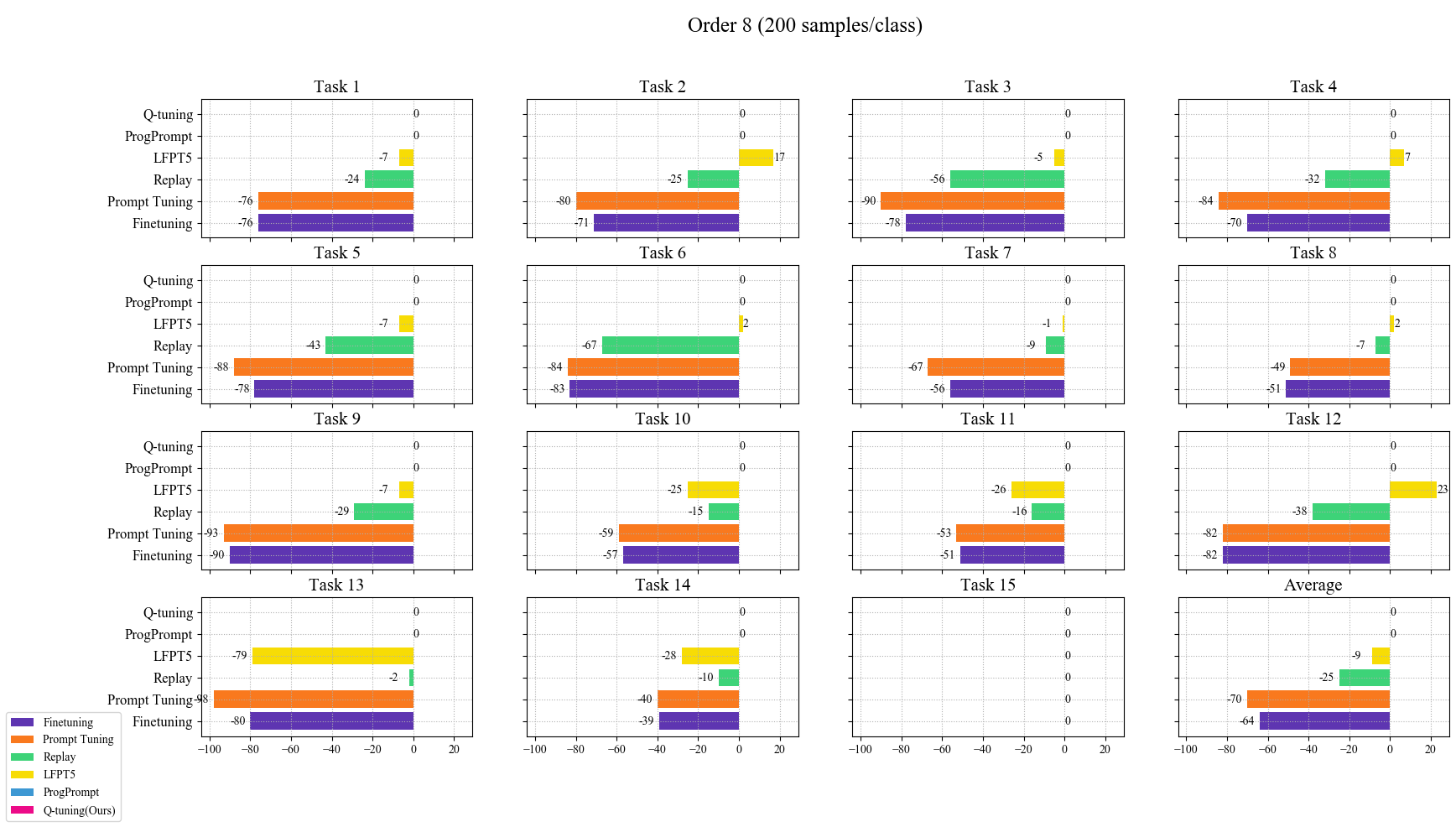}
\caption{Backward transfer score of different approaches on the order 8 (200 samples/class).}
\label{fig:bwt_8_200}
\end{figure*}

\begin{figure*}[htbp]
\centering
\setlength{\tabcolsep}{2pt}
\fontsize{9}{12}\selectfont
\small
\includegraphics[width=1\textwidth]{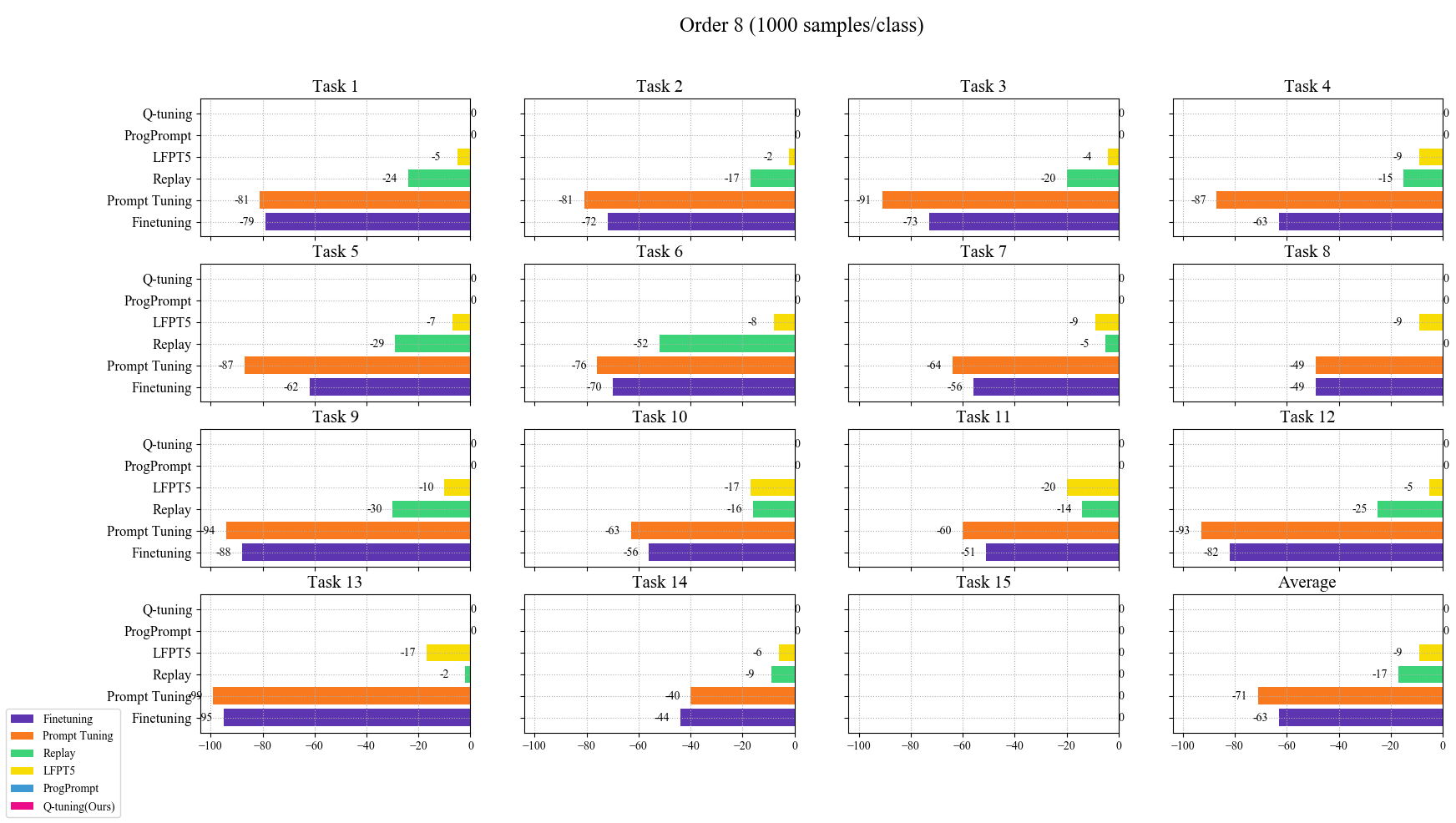}
\caption{Backward transfer score of different approaches on the order 8 (1000 samples/class).}
\label{fig:bwt_8_1000}
\end{figure*}

\begin{figure*}[htbp]
\centering
\setlength{\tabcolsep}{2pt}
\fontsize{9}{12}\selectfont
\small
\includegraphics[width=1\textwidth]{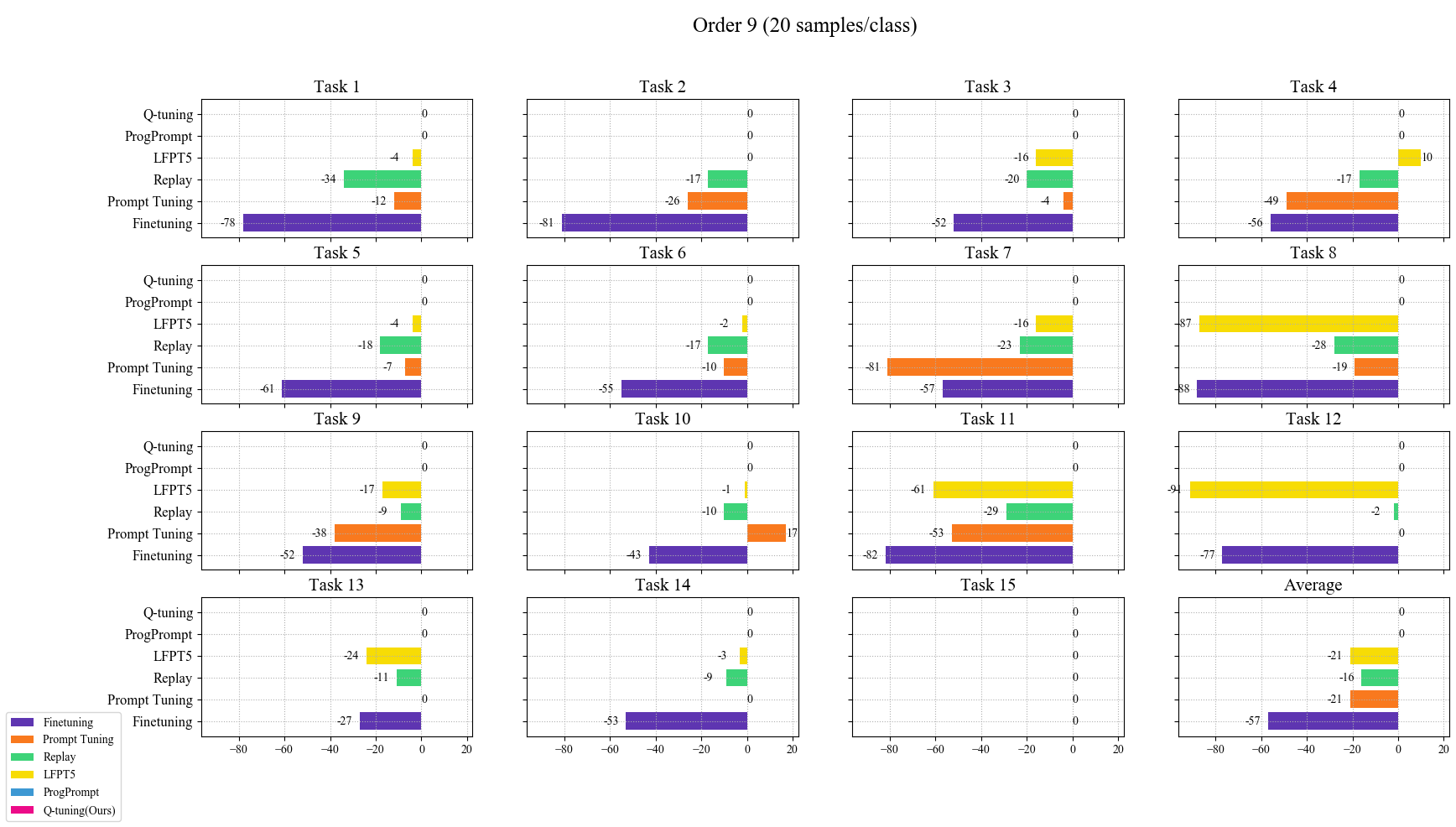}
\caption{Backward transfer score of different approaches on the order 9 (20 samples/class).}
\label{fig:bwt_9_20}
\end{figure*}

\begin{figure*}[htbp]
\centering
\setlength{\tabcolsep}{2pt}
\fontsize{9}{12}\selectfont
\small
\includegraphics[width=1\textwidth]{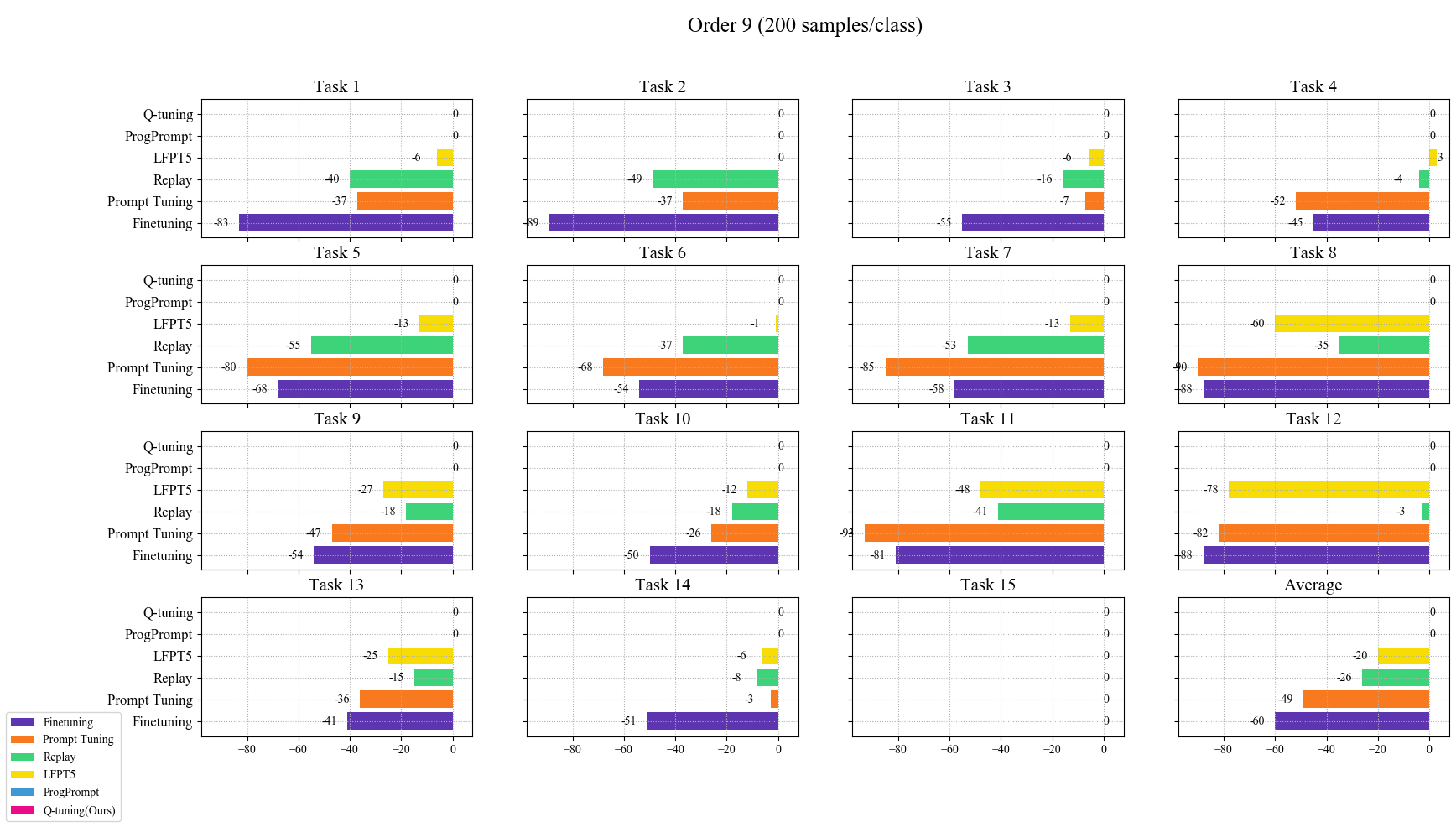}
\caption{Backward transfer score of different approaches on the order 9 (200 samples/class).}
\label{fig:bwt_9_200}
\end{figure*}

\begin{figure*}[htbp]
\centering
\setlength{\tabcolsep}{2pt}
\fontsize{9}{12}\selectfont
\small
\includegraphics[width=1\textwidth]{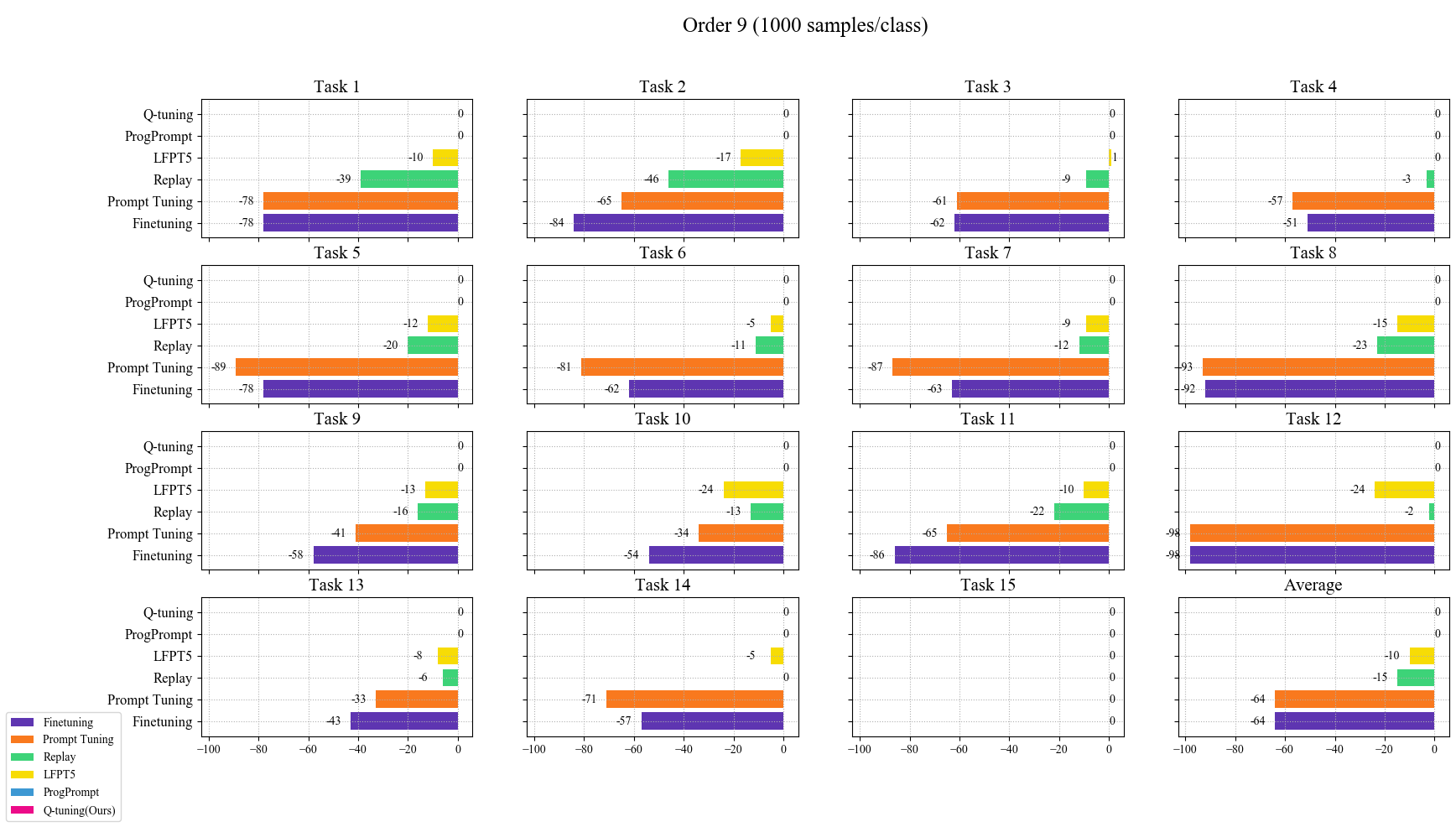}
\caption{Backward transfer score of different approaches on the order 9 (1000 samples/class).}
\label{fig:bwt_9_1000}
\end{figure*}

\begin{figure*}[htbp]
\centering
\setlength{\tabcolsep}{2pt}
\fontsize{9}{12}\selectfont
\small
\includegraphics[width=1\textwidth]{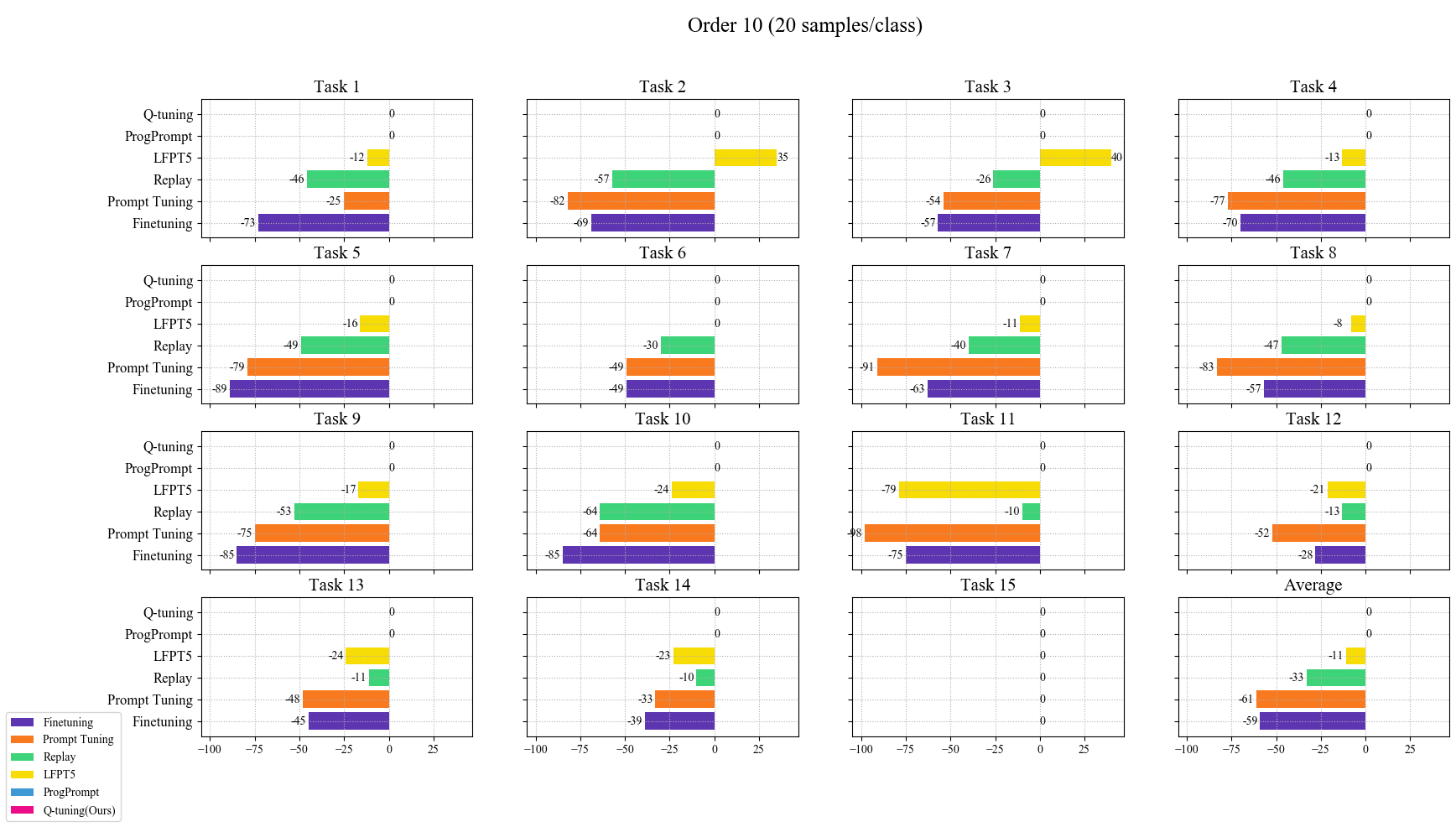}
\caption{Backward transfer score of different approaches on the order 10 (20 samples/class).}
\label{fig:bwt_10_20}
\end{figure*}

\begin{figure*}[htbp]
\centering
\setlength{\tabcolsep}{2pt}
\fontsize{9}{12}\selectfont
\small
\includegraphics[width=1\textwidth]{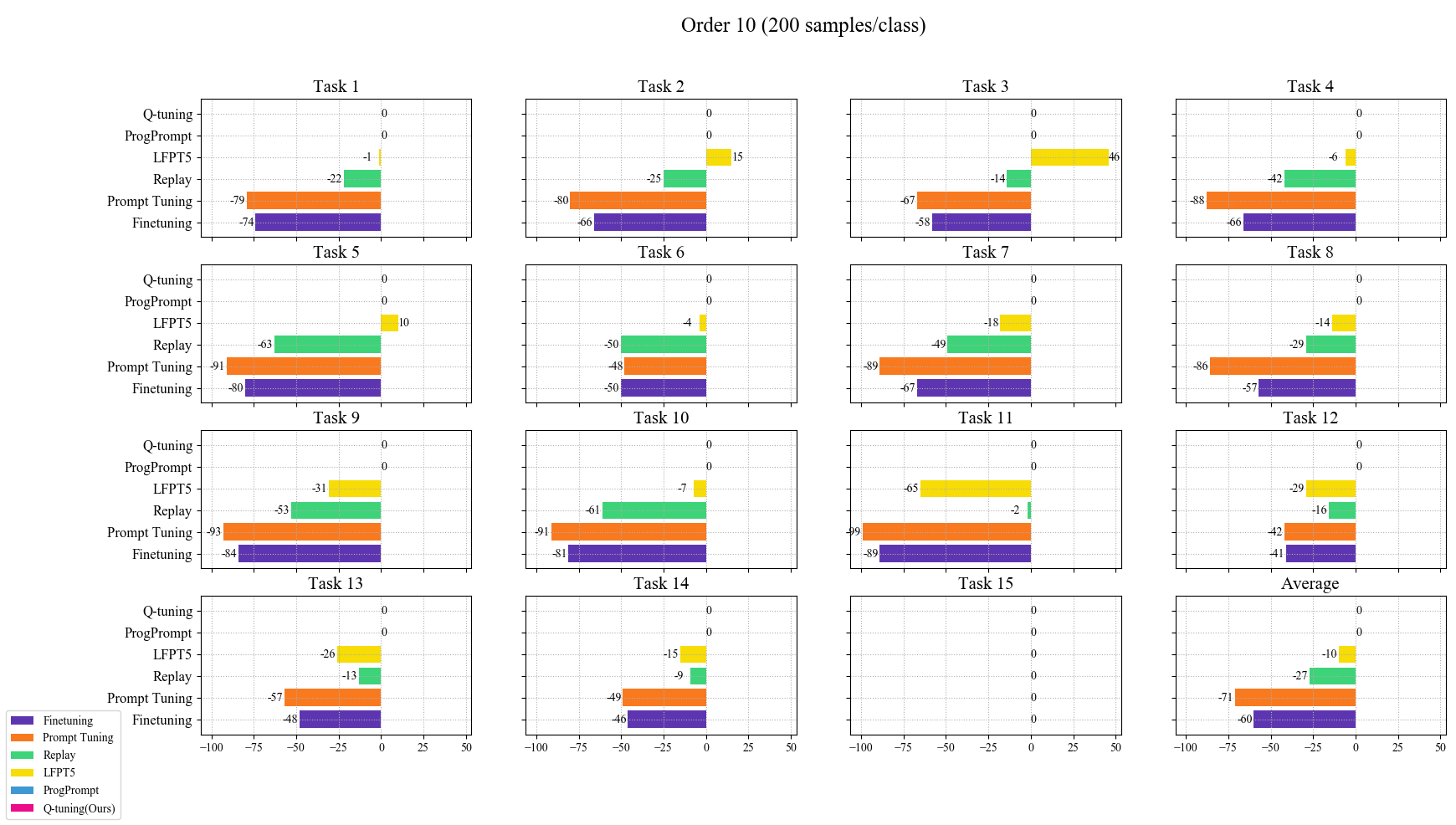}
\caption{Backward transfer score of different approaches on the order 10 (200 samples/class).}
\label{fig:bwt_10_200}
\end{figure*}

\begin{figure*}[htbp]
\centering
\setlength{\tabcolsep}{2pt}
\fontsize{9}{12}\selectfont
\small
\includegraphics[width=1\textwidth]{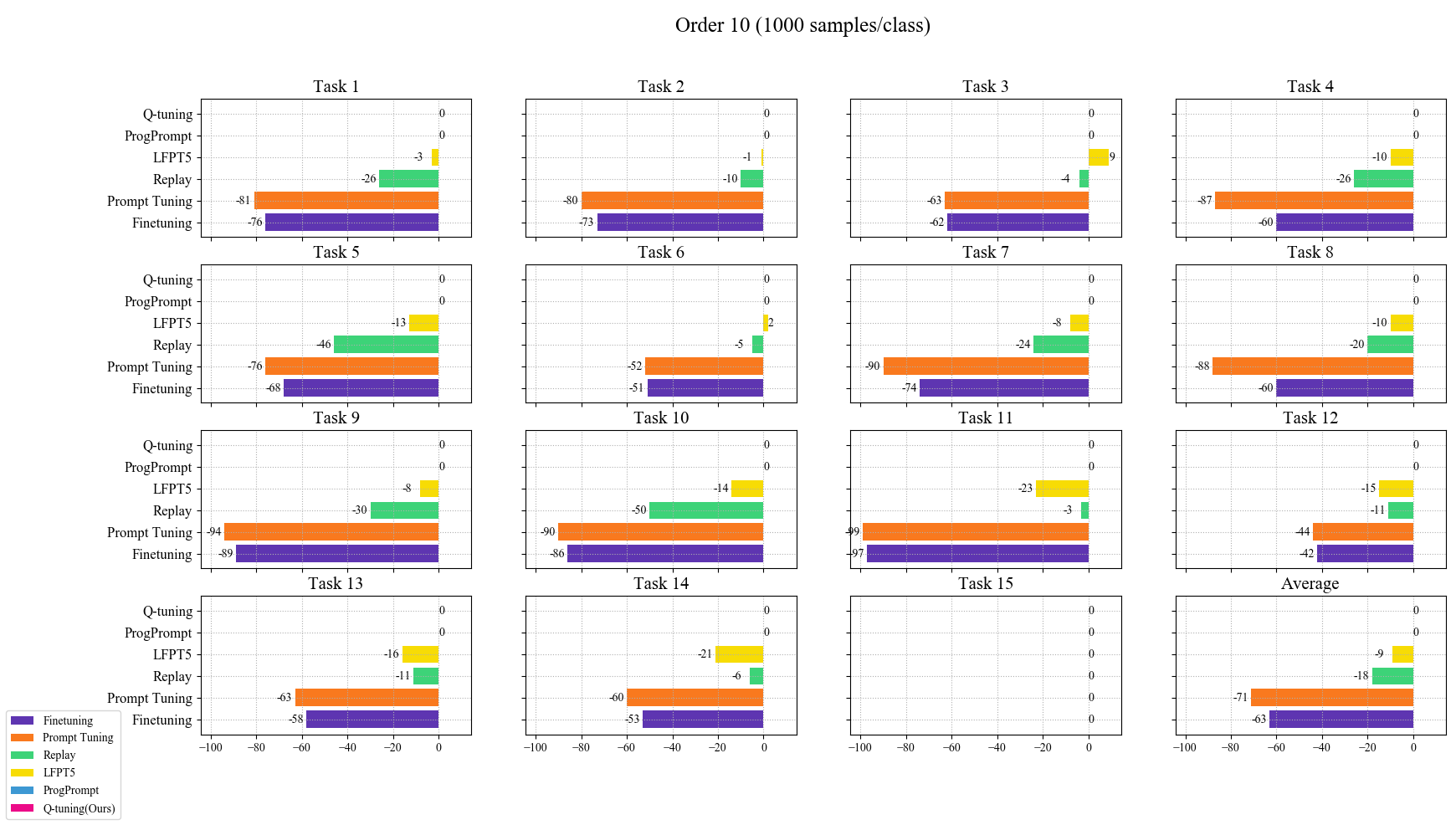}
\caption{Backward transfer score of different approaches on the order 10 (1000 samples/class).}
\label{fig:bwt_10_1000}
\end{figure*}

\begin{figure*}[htbp]
\centering
\setlength{\tabcolsep}{2pt}
\fontsize{9}{12}\selectfont
\small
\includegraphics[width=1\textwidth]{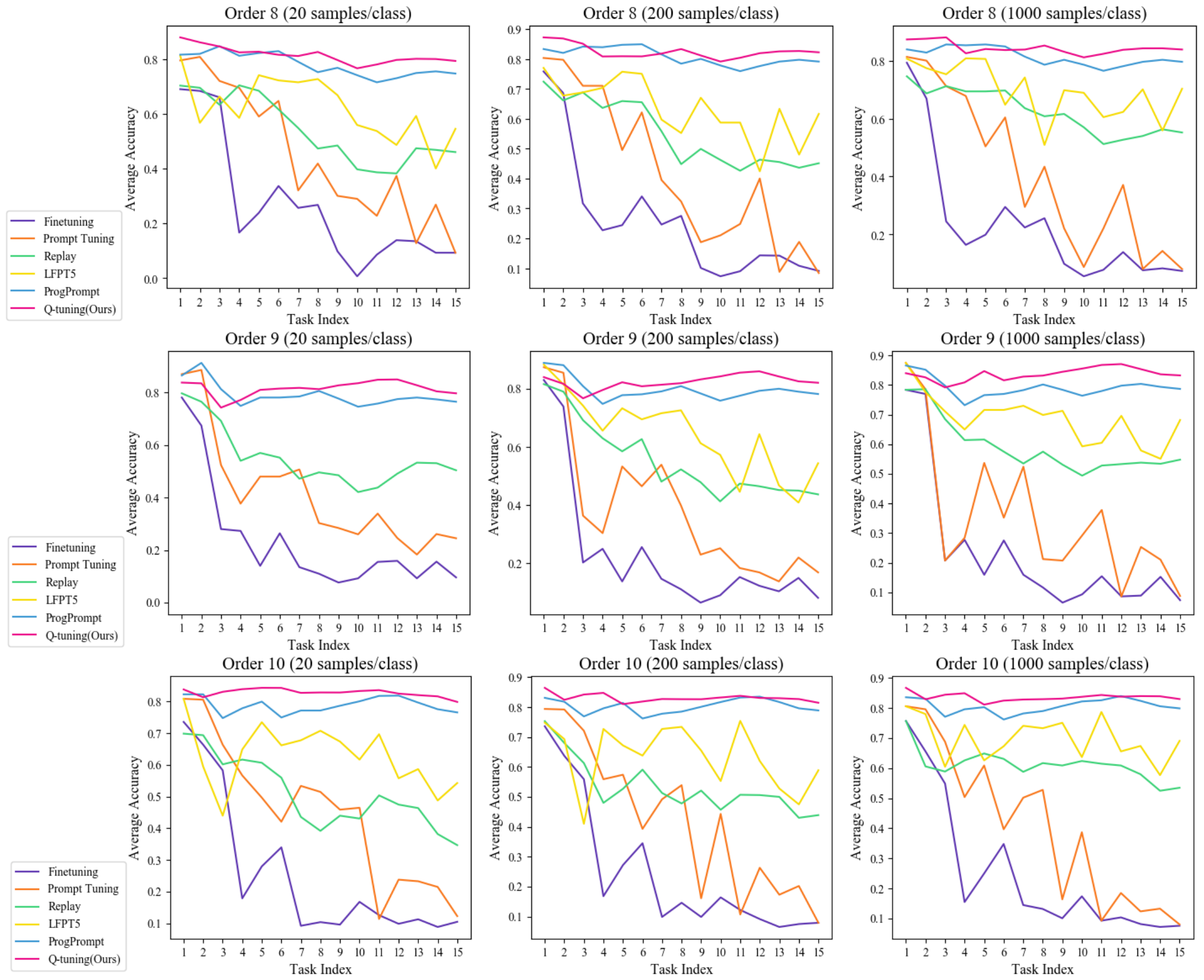}
\caption{Evolution of average accuracy after learning new tasks.}
\label{fig:evolution_avg}
\end{figure*}

\end{document}